\documentclass[twoside,11pt]{article}

\usepackage{blindtext}

%

%
%
%
\usepackage[abbrvbib,preprint]{jmlr2e}

\usepackage{algorithm}
\usepackage{amsmath,amssymb,amsfonts,mathrsfs}
\usepackage{breakcites}
\usepackage{graphicx}
\usepackage{soul}
\usepackage{pdfpages}
\usepackage{bm}
\usepackage{array}
\usepackage{multirow}
\usepackage{booktabs}  
\usepackage{lipsum}
\usepackage{makecell}
\usepackage{rotating}
\usepackage{tablefootnote}

\allowdisplaybreaks

\usepackage{threeparttable}






\newcommand{\gd}{\text{\tiny GD}}
\newcommand{\sgd}{\text{\tiny SGD}}

\newcommand{\nfrac}[2]{{#1} / {#2}}

\newcommand{\vone}{{\bm 1}}

\newcommand{\ve}{{\bm e}}

\newcommand{\vpi}{\bm \pi}

\newcommand{\bbE}{\mathbb{E}}

\newcommand{\bbR}{\mathbb{R}}

\newcommand{\cA}{{\mathcal A}}

\newcommand{\cE}{{\mathcal E}}

\newcommand{\cO}{{\mathcal O}}

\newcommand{\cW}{{\mathcal W}}

\newcommand{\cZ}{{\mathcal Z}}

\newcommand{\bigO}[1]{O\left(#1\right)}
\newcommand{\bigOmega}[1]{\Omega\left(#1\right)}

\newcommand{\argmin}{{\arg\min}}

\newcommand{\pr}{\text{Pr}}

\newcolumntype{M}[1]{>{\centering\arraybackslash}m{#1}}

\jmlrheading{24}{2023}{1-\pageref{LastPage}}{04/03; Revised XX/XX}{XX/XX}{XX-XXXX}{Zhang, Teng and Zhang}
\usepackage{lastpage}


\ShortHeadings{Lower Generalization Bound in Smooth SCO}{Zhang, Teng and Zhang}
\firstpageno{1}

\begin{document}

\title{Lower Generalization Bounds for GD and SGD \\
in Smooth Stochastic Convex Optimization}

\author{\name Peiyuan Zhang
       \email peiyuan.zhang@yale.edu \\
       \addr Yale University
       \AND
       \name Jiaye Teng  \email tjy20@mail.tsinghua.edu.cn \\
       \addr Tsinghua University
       \AND
       \name Jingzhao Zhang\thanks{Corresponding author.} \email jingzhaoz@mail.tsinghua.edu.cn \\
       \addr Tsinghua University \& Shanghai Qizhi Institute
       }

\editor{My editor}

\maketitle

\begin{abstract}
This work studies the generalization error of gradient methods. More specifically, we focus on how training steps $T$ and step-size $\eta$ might affect generalization in \emph{smooth} stochastic convex optimization (SCO) problems. We first provide tight excess risk lower bounds for Gradient Descent (GD) and Stochastic Gradient Descent (SGD) under the general \emph{non-realizable} smooth SCO setting, suggesting that existing stability analyses are tight in step-size and iteration dependence, and that overfitting provably happens. 
Next, we study the case when the loss is \emph{realizable}, i.e. an optimal
solution minimizes all the data points. Recent works show better rates can be attained but the improvement is reduced when training time is long. Our paper examines this observation by providing excess risk lower bounds for GD and SGD in two \emph{realizable} settings: 1)  $\eta T = \bigO{n}$, and (2)  $\eta T = \bigOmega{n}$, where $n$ is the size of dataset. In the first  case $\eta T = \bigOmega{n}$, our lower bounds tightly match and certify the respective upper bounds. However, for the case $\eta T = \bigOmega{n}$, our analysis indicates a gap between the lower and upper bounds. A conjecture is proposed that the gap can be closed by improving upper bounds, supported by analyses in two special scenarios. 
\end{abstract}

\begin{keywords}
  generalization, gradient methods, stochastic convex optimization, realizable setting, lower bounds
\end{keywords}

\section{Introduction}

Gradient methods are the predominant algorithms for training neural networks. These methods are not only efficient in time and space, but more importantly, produce solutions that generalize well~\citep{he2016deep,vaswani2017attention}. Understanding why neural networks trained with gradient methods perform well on test data can be very challenging, as the phenomenon results from an interplay between the network architecture, the data distribution as well as the training algorithm~\citep{jiang2019fantastic,zhang2021understanding}. In this work, we aim to shed some light on the role of (stochastic) gradient descent, and take a humble step by considering generalization in smooth convex problems.  

Much work has been done for analyzing gradient descent in convex learning problems. The early approach exploits the convex structure and shows that gradient methods find approximate empirical risk minimizers. Then the generalization can be bounded by uniform convergence~\citep{shalev2014understanding}. However, this approach is limited in scalability to high dimensions and can be provably vacuous even for the simple task of linear regression~\citep{shalev2010learnability,feldman2016generalization}. An alternative approach \citep{nemirovskij1983problem} that addresses the high-dimension problem is via online-to-batch conversion. This approach achieves minimax sample complexity, but it only applies to single-pass training, whereas in practice, models trained for longer periods can generalize better~\citep{hoffer2017train}.

Several recent explanations have been proposed to bridge the gap and bound generalization in multi-pass settings~\citep{soudry2018implicit,ji2019implicit,lyu2021gradient,bartlett2020benign}. These works demonstrate that gradient descent benefits from implicit bias and finds max-margin solutions for classification problems, as well as min-norm solutions for regression problems. However, characterizing the implicit bias for other loss functions or non-linear models remains a challenging task.

One method that generalizes to a broader range of loss functions and models is the stability argument~\citep{bousquet2002stability,hardt2016train}. This argument shows that if the model and the training method is not overly sensitive to data perturbations, the generalization error can be effectively bounded. However, this argument suffers from a large number of training updates especially when the step-size is sufficiently large, while in practice, generalization often benefits from longer training time. 
\emph{It is unclear whether longer training truly hurts generalization in smooth convex learning problems, because the tightness of the growing upper bounds remains unknown, and might just result from an artifact of the analysis. }

In this work, we analyze the above problem in smooth \emph{stochastic convex optimization} (SCO). More specifically, we focus on how the \emph{training horizon} $\eta T$ might affect the generalization property: the product of the step size $\eta$ and the number of iteration $T$ is a better measure for training intensity as large number of iterations may not even train a model if $\eta$ is close to zero. 
While several recent works have established fast convergence rates in test error when $\eta T$ is not too large under the \emph{realizable} condition~\citep{lei2020fine,nikolakakis2022beyond,schliserman2022stability}, our work provides the first tight lower bounds in these scenarios and suggest that known bounds are tight in some settings but likely not when $\eta T$ is large.



\paragraph{Our contributions.}
Let $\eta$ represent the step-size in gradient methods, $T$ denote the iteration number, and $n$ denote the sample size. Our contributions are as follows and presented in Table~\ref{tab: summary},
\begin{itemize}
    \item We first provide a tight lower bound $\bigOmega{\frac{1}{\eta T} + \frac{\eta T}{n}}$ for the smooth non-realizable SCO. 
    \item For realizable SCO, we notice a gap between two types of analysis, as shown in Table~\ref{tab: summary}: 
      \begin{enumerate}
        \item when $\eta T = \bigO{n}$, we prove matching lower bounds for the excess population risk for GD and SGD under the smooth and realizable SCO setting; 
        \item  when  $\eta T = \bigOmega{n}$, we provide a lower bound construction that suggests a gap exists between upper and lower bound. 
      \end{enumerate}
    \item We conjecture that the upper bound when $\eta T = \bigOmega{n}$ is not tight for large step-sizes. We provide evidences for the conjecture in two special scenarios: (1) one-dimensional convex problems and (2) high-dimensional linear regression.
\end{itemize}

Our results offer insights and answers to the question of how longer training can impact generalization error. For non-realizable cases, our lower bound suggests that training for a longer time can provably lead to overfitting, even for smooth convex problems. For realizable cases, our lower bounds suggest that longer training might actually reduce the generalization error. Moreover, our new lower bounds in the smooth setting, compared with those known in the nonsmooth setting, suggest that smoothness and realizability together might explain why training longer does not lead to overfitting.



\paragraph{Notations.} 
For any positive integer $n$, we denote the set $[n] := \{1, 2, \dots, n\}$. $\| \cdot\|$ denotes the $l_2$ norm for vectors. We use $\text{Bern}(p)$ to denote the Bernoulli distribution with probability $p$ to be $1$ and $\text{Unif}(S)$ to denote the uniform distribution over set $S$. Occasionally we will use capital letters, i.e. $W$, to denote "large" vectors, in contrast to usual vectors like $v,w$. Also, Let $\vone$ be all-one vector and $\ve_i$ be the vector with a $1$ in the $i$-th coordinate and $0$ elsewhere. 
\section{Preliminaries} \label{sec: setting}
Following ... , We study the generalization error of \emph{stochastic convex optimization (SCO)} problem. We receive a dataset of finite samples $S = \{z_1, \dots, z_n\}$, where each $z_i$ is i.i.d. drawn from an unknown distribution $D$ over sample space $\cZ$ and $n$ is the size of dataset.  Our goal is to find a model parameterized by $w \in \cW \subseteq \bbR^d$ that minimizes the \emph{population} (or expected) risk over $D$, defined as:
\begin{equation}
    F(w) = \bbE_{z \sim D}[f(w,z)]
\end{equation}
where $f(w, z): \bbR^d \times \cZ \to \bbR$ is the loss function evaluated on a single example $z \in \cZ$.

Since the population loss $F$ is typically inaccessible, we instead employ an averaged substitute on sample $S$, known as the \emph{empirical} risk:
\begin{equation}
    F_S(w) = \frac{1}{n} \sum_{i=1}^n f(w,z_i), \qquad S \sim D^n.
\end{equation}

Given dataset $S$ and any (stochastic) algorithm $\cA$, we denote $\cA[S]$ as the output of running $\cA$ on the training sample $S$. In this paper, we are interested in bounding the \emph{excess population risk} of $\cA[S]$:
\begin{equation*}
    \bbE_{S,\cA}[F(\cA[S])] - \min_{w \in \cW} F(w),
\end{equation*}
where the expectation is taken over the randomness of sample $S$ and algorithm $\cA$.

\subsection{Gradient Methods}
In this work, we focus on understanding the excess risk for two simplified algorithms: Gradient Descent (GD) and Stochastic Gradient Descent (SGD). Gradient descent is one of the most well-known optimization methods. At iteration $t$, GD employs the following recurrence:
\begin{equation} \label{eq: gd}
    w_{t+1} = w_t - \eta \nabla F_S(w_t),  
\end{equation}
where $\eta > 0$ is the step-size and $\nabla F_S(w)$ is the average stochastic gradient on sample set $S$. We usually employ the time average $w_{\gd} := \bar{w}_T = \frac{1}{T} \sum_{t=1}^T w_t$ as the output of GD.  

In practice, many practitioners favor the Stochastic Gradient Descent (SGD) method over GD for its computational efficiency. In this work, we study standard SGD, i.e., in iteration $t \in [T]$,
\begin{equation} \label{eq: sgd}
    w_{t+1} = w_t - \eta \nabla f(w_t,z_{i_t}), 
\end{equation}
where $z_{i_t}$ is uniformly sampled from $S$ \emph{with replacement} as $i_t \sim \text{Unif}([n])$. 
The output for SGD is the average $w_{\sgd} := \bar{w}_T = \frac{1}{T} \sum_{t=1}^T w_t$.  

\subsection{Smooth Stochastic Convex  Optimization}
In order to derive non-vacuous bounds on the excess risk for SCO, we make assumptions on the properties of $f(w, z)$. First, we assume the access to the value $f(w,z)$ and the unbiased stochastic gradient estimator $\nabla f(w, z)$ for any $w \in \cW$ and $z \in \cZ$. 

When the function is nonsmooth, this problem has been extensively studied \citep{bassily2020stability}, and known rates were proven to be optimal \citep{amir2021sgd,sekhari2021sgd,nemirovskij1983problem}.
However, less is known when the function is differentiable and smooth. Indeed, while upper bounds have been well-established in literature \citep{hardt2016train,lei2020fine,nikolakakis2022beyond}, the optimality of these results are yet to be certified by corresponding lower bounds. In this work, we aim to provide lower bounds for the smooth SCO setting and make the following assumptions.

\begin{definition} \label{def: smooth}
$f(w, z)$ is $L$-smooth if it satisfies $\| \nabla f(w_1, z) - \nabla f(w_2, z) \| \leq L \| w_1 - w_2 \|$ for any $w_1, w_2$ and $z \in \cZ$.
\end{definition}


\begin{definition} \label{def: convex}
$f(w, z)$ is convex if it satisfies $f(w_1, z) \geq f(w_2, z) + \langle w_1 - w_2, \nabla f(w_2, z)\rangle$ for any $w_1, w_2$ and $z \in \cZ$.
\end{definition}

\begin{table}[t]
\centering
\begin{threeparttable}
\begin{tabular}{ M{0.4cm} M{2.3cm}  M{3.5cm}  M{2.8cm} M{2.1cm} M{2.0cm}  }
\toprule
& & & & \\[-0.3cm]
& & GD & SGD & Best Sample Complexity & Overfitting for large $\eta T$\\[0.1cm] 
 \midrule 
& & & & \\[-0.2cm]
\multirow{4}{*}{\begin{sideways}Non-realizable\end{sideways}}
\multirow{4}{*}{
}
& Upper bound & \thead{{\normalsize	 $\bigO{\frac{1}{\eta T} + \frac{\eta T}{n} }$} \\{\footnotesize\citep{hardt2016train}}}  & \thead{{\normalsize	 $\bigO{\frac{1}{\eta T} + \frac{\eta T}{n} }$} \\{\footnotesize\citep{hardt2016train}}} & $\bigO{1/\sqrt{n}}$ & Yes  \\[0.3cm]
& & & & \\[-0.1cm]
& Lower bound & \thead{{\normalsize	 $\bigOmega{\frac{1}{\eta T} + \frac{\eta T}{n} }$} \\ {(\footnotesize Theorem.~\ref{thm: lb-sco-gd})}} & \thead{{\normalsize	 $\bigOmega{\frac{1}{\eta T} + \frac{\eta T}{n} }$} \\{(\footnotesize Theorem.~\ref{thm: lb-sco-sgd})}} & $\bigOmega{1/\sqrt{n}}$ & Yes \\[0.4cm]
\midrule 

& & & & \\[-0.2cm]
\multirow{7}{*}{\begin{sideways}Realizable \end{sideways}}
\multirow{7}{*}{
}
\multirow{7}{*}{
}
& Upper bound  & $\bigO{\frac{1}{\eta T} + \frac{1}{n} + \frac{\eta T}{n^2} }$\tnote{$\dag$}  {\footnotesize \citep{nikolakakis2022beyond}} & $\bigO{\frac{1}{\eta T} + \frac{\eta}{n} + \frac{\eta T}{n^2} }$ {\footnotesize	 \citep{lei2020fine}} & $\bigO{1/n}$ & Yes \\[0.3cm]
& & & & \\[-0.1cm]
& Lower bound ($\eta T = O(n)$) & $\bigOmega{\frac{1}{\eta T} + \frac{1}{n} + \frac{\eta T}{n^2}}$ {(\footnotesize Theorem.~\ref{thm: lb-1})} & $\bigOmega{\frac{1}{\eta T} + \frac{1}{n} + \frac{\eta T}{n^2}}$ {(\footnotesize Theorem.~\ref{thm: lb-1})} & $\bigOmega{1/n}$ & N.A.\\[0.4cm] 

 & & & & \\[-0.1cm]
& Lower bound ($\eta T = \Omega(n)$) & $\bigOmega{\frac{1}{\eta T} + \frac{1}{n}}$ {(\footnotesize Theorem.~\ref{thm: lb-2})} & $\bigOmega{\frac{1}{\eta T} + \frac{1}{n} }$ {(\footnotesize Theorem.~\ref{thm: lb-2})} & $\bigOmega{1/n}$ & No \\[0.4cm] 
\bottomrule
\end{tabular}
\caption{\small Summary of our results. We present our lower bounds and compare with existing upper bounds. In particular, we split to three setting: non-realizable, realizable under $\eta T = \bigO{n}$ and realizable under $\eta T = \bigOmega{n}$. For each setting, we provide lower bounds for the excess risk of GD, SGD. We also provide the best possible sample complexity and whether overfitting happens when $\eta T$ is large.}
\label{tab: summary}
\begin{tablenotes}
\item[$\dag$] The bound in $\eta, T$ and $n$ is not explicitly stated in \citet{nikolakakis2022beyond}. For the expression, please refer to a derivation in Appendix~\ref{appendix: gd-ub}.
\end{tablenotes}
\end{threeparttable}
\vspace{-0.3cm}
\end{table}

\paragraph{Realizable smooth SCO.}
The smooth SCO problem can be divided into two cases depending whether the optimal solution minimizes all data points simultaneously. When this happens, it is usually referred as a \emph{realizable} setting, formally defined below. 
\begin{definition} \label{def: realizable}
We say that $f(w, z)$, $z \in \cZ$ formalizes a realizable setting if for any $z \in \cZ$ $$f(w^*, z) = \min_{w} f(w, z), \quad \text{where} \quad w^* = \argmin_{w}F(w).$$
\end{definition}
If $f(w,z)$ is smooth, convex, and realizable, we refer the setting as the \emph{realizable} smooth SCO. The realizable condition implies immediately the property called \emph{weak growth condition}, which is stated in the following lemma.
\begin{lemma} \label{lemma: growth-condition} If $f(w, z), z \in \cZ$ is realizable and $L$-smooth, then for any $w, z \in \cZ$ it holds 
    \begin{equation*}
        \| \nabla f(w, z) \|^2 \leq 2L \left( f(w, z)  - f(w^*, z)\right).
    \end{equation*}
\end{lemma}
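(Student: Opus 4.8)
The plan is to reduce the claim to the standard \emph{self-bounding} property of $L$-smooth functions that are bounded below by their minimum, and to observe that realizability supplies exactly the global lower bound we need. First I would fix an arbitrary $z \in \cZ$ and treat $g(w) := f(w, z)$ as a single $L$-smooth function of $w$; the dependence on $z$ plays no further role beyond the realizability assumption. The heart of the argument is the descent lemma, which is the integrated form of $L$-smoothness: for any $w$ and any $v$,
\begin{equation*}
    g(v) \leq g(w) + \langle \nabla g(w), v - w \rangle + \frac{L}{2} \| v - w \|^2 .
\end{equation*}

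Next I would make the optimal choice $v = w - \tfrac{1}{L} \nabla g(w)$, which minimizes the right-hand side. Substituting and simplifying yields the one-step decrease
\begin{equation*}
    g\!\left( w - \tfrac{1}{L} \nabla g(w) \right) \leq g(w) - \frac{1}{2L} \| \nabla g(w) \|^2 .
\end{equation*}
This inequality holds purely from smoothness and does not yet use convexity or realizability.

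The final step is to lower-bound the left-hand side. Here I would invoke the realizable setting (Definition~\ref{def: realizable}), which guarantees that $f(w^*, z) = \min_{w} f(w, z)$ for the \emph{same} $z$; in particular $g(v) \geq f(w^*, z)$ for the point $v = w - \tfrac{1}{L}\nabla g(w)$. Combining this with the one-step decrease gives $f(w^*, z) \leq f(w, z) - \tfrac{1}{2L}\|\nabla f(w,z)\|^2$, and rearranging produces the stated bound $\|\nabla f(w, z)\|^2 \leq 2L\left( f(w,z) - f(w^*, z)\right)$. Since $z$ was arbitrary, the conclusion holds for all $w$ and all $z \in \cZ$.

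The argument is essentially routine, so there is no serious obstacle; the only point requiring a little care is that realizability is what lets us use $f(w^*,z)$ (rather than merely the population minimizer value) as a valid per-sample lower bound on $g$, so that the self-bounding inequality can be stated pointwise in $z$. I should also confirm that the $L$-smoothness of Definition~\ref{def: smooth} is assumed to hold globally (for all $w_1, w_2$), which justifies applying the descent lemma at the shifted point $w - \tfrac{1}{L}\nabla g(w)$ without any projection onto $\cW$; convexity (Definition~\ref{def: convex}) is in fact not needed for this particular lemma.
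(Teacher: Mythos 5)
Your proof is correct: the descent-lemma computation at $v = w - \tfrac{1}{L}\nabla g(w)$ followed by the per-sample lower bound $g(v) \geq f(w^*,z)$ supplied by realizability is exactly the standard self-bounding argument, and you are right that convexity is not needed. The paper states this lemma without proof (it is a known weak-growth-condition fact), so there is nothing to compare against; your argument is the canonical one and fills the omission cleanly.
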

The growth condition connects the rates at which the stochastic gradients shrink relative to its value. It is widely employed in stochastic optimization literature to improve the convergence rate of SGD and GD under overparameterized or realizable setting \citep{vaswani2019fast}. Recent papers \citep{lei2020fine,schliserman2022stability,nikolakakis2022beyond} focused on the generalization bound under realizable smooth SCO also suggest that such an assumption improves the sample complexity upper bounds.

\paragraph{Non-realizable smooth SCO.} 
We say a convex learning problem is non-realizable if it does not satisfy the condition in Definition~\ref{def: realizable}. In this setting, known upper bounds for sample complexity actually yield a slower convergence rate at $\bigO{1/\sqrt{n}}$~\citep{hardt2016train} when we set $\eta T = \Theta(\sqrt{n})$, which suggests that longer training leads to overfit.

Next, we discuss our main results in the settings introduced above, and explain why a gap exists for the realizable setting when $\eta T$ is large.

\section{Overview of Results: Training Horizon $\eta T$ and Overfitting}
In this section, we present and discuss our main result on generalization lower bounds. Also, we give a comparison with existing upper bounds. 

Before proceeding to our lower bounds, we motivate the study of the role of the training horizon $\eta T$. Intuitively, compared with the number of iterations $T$, the product of the step-size $\eta$ and $T$ provides a more accurate measure of the intensity of the training process. This is because an arbitrarily small step-size with a large $T$ does not necessarily decrease the optimization error to convergence. Moreover, the importance of $\eta T$ is showcased by existing generalization upper bounds for SGD: \citet{lei2020fine} established the first excess risk upper bound for SGD under realizable smooth SCO:
\begin{equation} \label{eq: ub-sgd-tn}
     \bbE[F(w_{\sgd})] - \min_{w \in \cW} F(w) = \bigO{\frac{1}{\eta T} + \frac{\eta}{n} + \frac{\eta T}{n^2}}.
\end{equation}
The upper bound suggests the generalization error will first decrease and then increase when training horizon $\eta T$ becomes large. Compared with the non-realizable case, a fast rate $O(1/n)$ of sample complexity can be obtained only if training horizon satisfies $\eta T = O(n)$. Otherwise, if $\eta T$ is sufficiently large, say $\eta T = n^2$, overfitting will happen and the generalization error becomes $O(1)$. This similar to the overfitting behavior of non-realizable upper bound  $\bigO{\tfrac{1}{\eta T} + \tfrac{\eta T}{n} }$.

These upper bound results differ from the empirical observation that often longer training helps generalization. To bridge the gap between theory and practice, in this work, we analyze the relationship between overfitting, generalization error and training horizon $\eta T$ from a \emph{lower bound} perspective. Our lower bound construction indicates the generalization error has different regimes depending on the horizon $\eta T$: when $\eta T = O(n)$, the lower bound for SGD is (per Theorem~\ref{thm: lb-1})
\begin{equation}
     \bbE[F(w_{\sgd})] - \min_{w \in \cW} F(w) = \bigOmega{\frac{1}{\eta T} + \frac{1}{n} + \frac{\eta T}{n^2}},
\end{equation}
whereas when $\eta T = \Omega(n)$, we have (per Theorem~\ref{thm: lb-2})
\begin{equation}
     \bbE[F(w_{\sgd})] - \min_{w \in \cW} F(w) = \bigOmega{\frac{1}{\eta T} + \frac{1}{n}}.
\end{equation}
Our lower bound result suggests overfitting will \emph{not} happen, and the sample complexity $\Omega(1/n)$ can be achieved even when the training horizon $\eta T$ is sufficiently large. This is in contrast with the existing upper bounds, nevertheless, corresponds to the empirical results. To bridge the gap, we conjecture that the upper bound can be improved under the setting $\eta T = \Omega(n)$, and provide evidences in Section~\ref{sec: infinite}. Similar conclusion also applies to GD.

Moreover, in this work, we provide novel results on the lower bounds of both GD and SGD in the non-realizable setting, which to the best of our knowledge have not been previously reported. Specifically, these non-realizable lower bounds tightly match the existing upper bounds, which suggests that overfitting always occurs when the training horizon $\eta T$ is sufficiently large under the non-realizable condition. We report our lower bound results and compare them with the corresponding upper bounds in Table~\ref{tab: summary}. To give a sense of the role of realizable condition in improving generalization, we include the best possible sample complexity under each setting. Additionally, to illustrate the role of $\eta T$, we indicate whether overfitting will occur as $\eta T$ tends to infinity.

\section{Main Theorems: Lower Bounds in Smooth SCO} \label{sec: lb}
In this section we present our main results on lower bounds for the smooth SCO.
We aim to reduce the gaps mentioned in Section~\ref{sec: setting} for all the above settings. To this end, we will split our discussion to three parts: (1) non-realizable, (2) realizable with $\eta T = \bigO{n}$ and (3) realizable with $\eta T = \bigOmega{n}$.

\subsection{Non-realizable Setting} \label{sec: nonrealizable}
We first discuss the non-realizable setting and provide a novel lower bound for the excess risk of GD in the following theorem.
\begin{theorem} \label{thm: lb-sco-gd}
    For any $\eta > 0$, $T > 1$ with $1/T \leq \eta = \bigO{1}$\footnote{This is a mild condition because (1) step-size cannot exceed $\bigO{1}$, in order to make the optimization method converge for $\bigO{1}$-smooth function, (2) an overly small step-size $\eta$ cannot even guarantee the convergence in the optimization sense and $T$ is arbitrarily large to ensure $\eta T \geq 1$. We will assume this holds in the statement of rest theorems and lemmas.}, there exists a convex, $1$-smooth $f(w,z): \bbR \to \bbR$ for every $z \in \cZ$, and a distribution $D$ such that, with a bounded initialization $\| w_1 - w^* \| = \bigO{1}$, the output $w_{\gd}$ for GD satisfies
    \begin{align*}
        \bbE[F(w_T)] - F(w^*) = \bigOmega{\frac{1}{\eta T} + \frac{\eta T}{n}}.
    \end{align*}
\end{theorem}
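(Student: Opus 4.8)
The plan is to exhibit a single one-dimensional noisy-quadratic instance that simultaneously forces an optimization (bias) error of order $1/(\eta T)$ and an overfitting (variance) error of order $\eta T/n$. Concretely, I would take $\cZ = \{-1,+1\}$ with $z \sim \mathrm{Unif}\{-1,+1\}$ and set, over $\cW = \bbR$,
\[
f(w,z) = \tfrac{\epsilon}{2} w^2 - z w, \qquad \epsilon = \frac{c}{\eta T},
\]
for a small absolute constant $c \in (0,1)$. Since $\eta T \ge 1$ by hypothesis, $\epsilon \le c \le 1$, so each $f(\cdot,z)$ is convex and $1$-smooth; moreover $\bbE_z[z]=0$ gives $F(w) = \tfrac{\epsilon}{2}w^2$, whence $w^* = \argmin_w F(w) = 0$ and $F(w^*)=0$. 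The instance is genuinely non-realizable, since $f(0,z)=0$ while $\min_w f(w,z) = -1/(2\epsilon)$. I would initialize at $w_1 = 1$, so that $\|w_1 - w^*\| = \bigO{1}$ as required.

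Next I would solve the GD recursion in closed form. Writing $\bar z = \tfrac1n\sum_i z_i$, the full empirical gradient is $\nabla F_S(w) = \epsilon w - \bar z$, so GD is the affine recursion $w_{t+1} = (1-\eta\epsilon)w_t + \eta\bar z$. With $\rho := 1-\eta\epsilon = 1 - c/T \in (0,1)$ this unrolls to
\[
w_t = \rho^{\,t-1} w_1 + \frac{\bar z}{\epsilon}\bigl(1 - \rho^{\,t-1}\bigr),
\]
and the averaged iterate $\bar w_T$ has the same form with $\rho^{t-1}$ replaced by its Cesàro average, which is a constant in $(0,1)$ when $\eta\epsilon=c/T$.

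Then I would take expectation over $S$ (the only randomness, as GD is deterministic given $S$). Using $\bbE_S[\bar z]=0$ and $\bbE_S[\bar z^2]=1/n$, the cross term vanishes and the population risk splits cleanly:
\[
\bbE_S[F(w_T)] - F(w^*) = \frac{\epsilon}{2}\,\rho^{\,2(T-1)} w_1^2 \;+\; \frac{\bigl(1-\rho^{\,T-1}\bigr)^2}{2\epsilon n}.
\]
Plugging in $\epsilon = c/(\eta T)$ and using the two-sided estimate $\rho^{\,T-1}=(1-c/T)^{T-1}=\Theta(1)$ (bounded away from both $0$ and $1$ for fixed $c$, via standard $\log(1-x)$ bounds), the first term is $\Theta(\epsilon)=\bigOmega{1/(\eta T)}$ and the second is $\Theta(1/(\epsilon n))=\bigOmega{\eta T/n}$. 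Since both are nonnegative, their sum is $\bigOmega{\frac{1}{\eta T} + \frac{\eta T}{n}}$, which is the claim; the identical computation with the averaged coefficient handles the output $w_{\gd}=\bar w_T$.

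The technical work is light and concentrated in the two-sided control of $\rho^{\,T-1}=(1-c/T)^{T-1}$: I must choose $c$ and establish bounds ensuring this factor stays bounded away from $1$ (so the overfitting term is not annihilated) and away from $0$ (so the optimization term survives), uniformly over all admissible $T$. The only conceptual points are that letting $f$ and $\epsilon$ depend on the horizon $\eta T$ is permitted by the existential quantifier, and that the instance is non-realizable; both hold by construction.
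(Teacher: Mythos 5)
Your proof is correct, and it reaches the same conclusion through the same hard instance but by a genuinely different mechanism. The paper also uses the noisy quadratic $f(w,z)=\frac{w^2}{2\eta T}+zw$ with $z\sim\mathrm{Unif}\{\pm1\}$, but it splits the bound into two separate lemmas: the $\bigOmega{1/(\eta T)}$ term comes from a \emph{different}, deterministic two-dimensional quadratic (Lemma~\ref{lemma: lb-suboptimality}), while the $\bigOmega{\eta T/n}$ term comes from the noisy quadratic initialized at $w_1=0$, where the drift of $w_t$ is controlled via an anti-concentration bound (Lemma~7 of \citet{sekhari2021sgd}) asserting that $\sum_i z_i\le-\sqrt n/2$ with constant probability; the two instances are then combined. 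You instead start at $w_1=1$, solve the affine recursion in closed form, and exploit the fact that $F$ is exactly quadratic to compute $\bbE_S[F(w_T)]$ via $\bbE[\bar z]=0$, $\bbE[\bar z^2]=1/n$, obtaining a clean bias--variance split in which the bias term is $\Theta(1/(\eta T))$ and the variance term is $\Theta(\eta T/n)$ \emph{simultaneously from one instance and in full expectation} rather than on a constant-probability event. What your route buys is self-containedness (no external anti-concentration lemma) and a single construction yielding the sum of both terms; what the paper's route buys is modularity (its $1/(\eta T)$ and $1/n$ lemmas are reused verbatim in Theorems~\ref{thm: lb-1} and~\ref{thm: lb-2}) and robustness to losses that are not exactly quadratic in $w$, where one cannot push the expectation through in closed form. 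The one step you correctly flag as remaining is the two-sided control of $\rho^{T-1}=(1-c/T)^{T-1}$ and of its Ces\`aro average $A=\frac{1}{T}\sum_{t=1}^T\rho^{t-1}=\frac{1-\rho^T}{c}$: both do check out, e.g.\ monotonicity of $(1-c/T)^T$ in $T$ gives $1-c+\tfrac{c^2}{4}\le\rho^T\le e^{-c}$ for $T\ge2$, hence $\tfrac12\le A\le 1-\tfrac{c}{4}$, so neither term is annihilated. No gap.
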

The lower bound in Theorem~\ref{thm: lb-sco-gd} tightly matches the corresponding upper bound established in \cite{hardt2016train} (see Table~\ref{tab: summary}). It can be translated to a lower bound of sample complexity: for any $T > 1$, by setting $\eta = \sqrt{n}/T$, we derive a $\bigOmega{1/\sqrt{n}}$ bound which certifies the optimality of existing upper bound  $\bigO{1/\sqrt{n}}$. To the best of our knowledge, this is the first such result for GD. A recent work provides lower bound for the uniform stability of (S)GD \citep{zhang2022stability} under smooth SCO, but it does not directly imply a bound on the excess risk.

The key step in the proof of Theorem~\ref{thm: lb-sco-gd} is to find a hard instance that gives an overfitting lower bound $\bigOmega{\eta T/n}$. To this end, we employ a technique inspired by Theorem 3 and Lemma 7 in \citet{sekhari2021sgd}: in non-realizable setting, the stochastic gradient does not necessarily scale down with the value of $f(w,z)$. As a result, by utilizing an \emph{anti-concentration} argument, we show that with non-vanishing probability $\bigOmega{1}$, the absolute value of $w_t$ increases by a rate of $\bigOmega{\eta/\sqrt{n}}$ in each step. Then, calculation suggests a $\bigOmega{\eta T/n}$ bound for the function value. The details can be found in Appendix~\ref{appendix: lb-sco-gd}. In the meanwhile, the term $\bigOmega{1/\eta T}$ reflects the optimization error and the proof is provided in Lemma~\ref{lemma: lb-suboptimality}, Appendix~\ref{appendix: lb-suboptimality}. 

A similar result holds for SGD in the following theorem.
\begin{theorem} \label{thm: lb-sco-sgd}
    For any $\eta > 0$, $T > 1$, there exists a convex, $1$-smooth $f(w,z): \bbR \to \bbR$ for every $z \in \cZ$, and a distribution $D$ such that, with a bounded initialization $\| w_1 - w^* \| = \bigO{1}$, 
    the output $w_{\sgd}$ for SGD satisfies
      \begin{align*}
        \bbE[F(w_{\sgd})] - F(w^*) = \bigOmega{\frac{1}{\eta T} + \frac{\eta T}{n}}.
    \end{align*}
\end{theorem}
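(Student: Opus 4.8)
The plan is to prove the bound term by term, exhibiting one hard instance for each of $\bigOmega{1/\eta T}$ and $\bigOmega{\eta T/n}$, and then combining them via $\max(a,b)\ge\tfrac12(a+b)$: since the theorem only asks for the \emph{existence} of some convex, $1$-smooth $f$ attaining the stated rate, I may simply retain the instance whose term dominates in the given regime of $\eta,T,n$. This mirrors the decomposition already used for GD in Theorem~\ref{thm: lb-sco-gd}; in fact the overfitting instance will be identical, and only the stochastic part of the analysis changes.

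For the overfitting term $\bigOmega{\eta T/n}$ I would take $\cZ=\{-1,+1\}$ with $z$ symmetric ($\bbE[z]=0$) and the one-dimensional loss $f(w,z)=\tfrac{\lambda}{2}w^2-zw$, which is convex and $\lambda$-smooth; choosing $\lambda\asymp 1/(\eta T)$ makes it $1$-smooth (using $\eta T\ge 1$) and places $w^*=0$, so $w_1=0$ gives $\|w_1-w^*\|=0=\bigO{1}$. The population risk is $F(w)=\tfrac\lambda2 w^2$ with $F(w^*)=0$, whereas the empirical risk is $F_S(w)=\tfrac\lambda2 w^2-\bar z_S w$ with $\bar z_S=\tfrac1n\sum_i z_i$, and the entire effect is driven by the fluctuation $\bbE_S[\bar z_S^2]=1/n$. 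This instance is genuinely non-realizable, since $f(0,z)=0>\min_w f(w,z)=-z^2/(2\lambda)$.

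The heart of the SGD argument is that the \emph{conditional} drift of SGD reproduces the GD recursion. Writing the update as $w_{t+1}=(1-\eta\lambda)w_t+\eta z_{i_t}$ and taking expectations over $i_t\sim\mathrm{Unif}([n])$ conditioned on $S$ gives $\bbE[w_{t+1}\mid S]=(1-\eta\lambda)\bbE[w_t\mid S]+\eta\bar z_S$, i.e. the deterministic linear recursion solved by $\bbE[w_t\mid S]=\tfrac{\bar z_S}{\lambda}\big(1-(1-\eta\lambda)^{t-1}\big)$. With $\eta\lambda T=\Theta(1)$ a constant fraction of iterates satisfy $\eta\lambda t=\Theta(1)$, so averaging yields $\bbE[\bar w_T\mid S]=\Theta(\bar z_S/\lambda)$. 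Then, by conditional Jensen (equivalently, discarding the nonnegative SGD sampling variance),
\[
\bbE[F(w_{\sgd})]-F(w^*)=\tfrac\lambda2\,\bbE[\bar w_T^2]\ge\tfrac\lambda2\,\bbE_S\big[(\bbE[\bar w_T\mid S])^2\big]=\Theta\!\Big(\tfrac{1}{\lambda n}\Big)=\bigOmega{\tfrac{\eta T}{n}}.
\]
An equivalent route, matching the language of Theorem~\ref{thm: lb-sco-gd}, is an anti-concentration bound giving $|\bar z_S|\ge c/\sqrt n$ with probability $\bigOmega{1}$, which forces a per-step drift of magnitude $\bigOmega{\eta/\sqrt n}$ and hence $|\bar w_T|=\bigOmega{1/(\lambda\sqrt n)}$ on that event.

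For the optimization term $\bigOmega{1/\eta T}$ I would invoke Lemma~\ref{lemma: lb-suboptimality}: a suitable smooth convex instance, which may be taken deterministic (independent of $z$, so that SGD coincides with GD and population equals empirical risk), forces the averaged iterate to incur suboptimality $\bigOmega{1/\eta T}$ after $T$ steps. Retaining whichever of the two instances corresponds to the larger of $1/\eta T$ and $\eta T/n$ produces a single convex, $1$-smooth $f$ with excess risk $\bigOmega{1/\eta T+\eta T/n}$. The main obstacle I anticipate is making the SGD drift argument fully rigorous for the \emph{averaged} iterate under with-replacement sampling: one must verify that the conditional-mean computation and the constant-fraction estimate on $\tfrac1T\sum_t(1-\eta\lambda)^{t-1}$ hold uniformly over $1/T\le\eta=\bigO{1}$, and confirm that the SGD sampling variance is nonnegative and can be safely dropped rather than controlled. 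Everything else—convexity, $1$-smoothness, the bounded initialization, and the identity $\bbE_S[\bar z_S^2]=1/n$—is routine.
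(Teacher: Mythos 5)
Your proposal is correct and follows essentially the same route as the paper: the same decomposition into a deterministic optimization-error instance (Lemma~\ref{lemma: lb-suboptimality}) plus a one-dimensional quadratic-plus-linear overfitting instance with $\lambda=\Theta(1/\eta T)$, combined by keeping whichever instance dominates. The only difference is cosmetic: where you lower-bound the drift via the exact identity $\bbE_S[\bar z_S^2]=1/n$ together with conditional Jensen, the paper instead invokes an anti-concentration bound to get $|\sum_i z_i|\geq \sqrt{n}/2$ with constant probability before applying Jensen---both are valid, and your explicit conditioning on $S$ when solving the linear recursion for $\bbE[w_t\mid S]$ is if anything slightly cleaner than the paper's presentation.
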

This also matches the SGD upper bound in \cite{hardt2016train} and implies a sample complexity bound $\bigOmega{1/\sqrt{n}}$ if we set $\eta = \sqrt{n}/T$ for any $T$.  

We emphasize the bound for SGD is novel compared with existing works: it is a forklore that in \citet{nemirovskij1983problem}, single-pass SGD ($ T = n $) achieves a sample complexity lower bounds for Lipschitz convex functions (where a smooth function within a bounded domain is automatically Lipschitz). Yet, our result is the first to provide an explicit dependence on $T$ and $\eta$ and applies to an arbitrary number of updates. It shows that training longer can provably lead to overfitting, and answers the question raised in the introduction for the non-realizable setting.

\subsection{Realizable Setting: \texorpdfstring{$\eta T = O(n)$}{eta T=O(n)}} \label{sec: t_equal_n}
Better stability-based generalization upper bound are known for realizable problems. However, we will see that our lower bounds suggest known results may not be tight. In this subsection we first provide our lower bounds for the realizable setting when condition $\eta T = \bigO{n}$ is satisfied. The next theorem  characterizes the lower bounds for GD and SGD.
\begin{theorem} \label{thm: lb-1}
For every $\eta > 0$, $T > 1$, if condition $T = \cO(n)$ holds, then there exists a convex, $1$-smooth and realizable $f(w, z): \bbR^d \to \bbR$ for every $z \in \cZ$, and a distribution $D$ such that, with a bounded initialization $\| w_1 - w^* \| = \bigO{1}$, the output $w_{\gd}$ for GD satisfies
\begin{align*}
    \bbE[F(w_{\gd})] - F(w^*) = \bigOmega{\frac{1}{\eta T} + \frac{1}{n} + \frac{\eta T}{n^2}}.
\end{align*}
Similarly, the output $w_{\sgd}$ for SGD satisfies
\begin{align*}
    \bbE[F(w_{\sgd})] - F(w^*) = \bigOmega{\frac{1}{\eta T} + \frac{1}{n} + \frac{\eta T}{n^2}}.
\end{align*}
\end{theorem}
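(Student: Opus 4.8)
The plan is to prove both bounds (GD and SGD) by exhibiting, for each admissible triple $(\eta, T, n)$ with $\eta T = \cO(n)$, a single realizable instance whose excess risk is $\bigOmega{\tfrac{1}{\eta T} + \tfrac{1}{n} + \tfrac{\eta T}{n^2}}$. Since the three summands are incomparable across the parameter range, I would not try to make one gadget carry all of them at once. Instead I would build $f$ as an orthogonal direct sum $f(w,z) = f_{\mathrm{opt}}(w^{(1)}) + f_{\mathrm{stat}}(w^{(2)},z) + f_{\mathrm{of}}(w^{(3)},z)$ on $\bbR^{d} = \bbR^{d_1}\oplus\bbR^{d_2}\oplus\bbR^{d_3}$. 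Because the blocks are decoupled, the gradient in each block depends only on that block, so both GD \eqref{eq: gd} and SGD \eqref{eq: sgd} run independently in each block (for SGD the shared index sequence $i_t$ still induces a valid with-replacement run inside every block), and the expected excess risks add. Smoothness, convexity, and realizability are all preserved under direct sums, with $w^* = (w^{*(1)},w^{*(2)},w^{*(3)})$. It therefore suffices to realize each of the three terms in its own block.

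The optimization term $\bigOmega{1/(\eta T)}$ is the easiest: I take $f_{\mathrm{opt}}$ to be $z$-independent, which is realizable for free, and invoke the one-dimensional suboptimality construction already used for the non-realizable bounds (Lemma~\ref{lemma: lb-suboptimality}), whose averaged iterate provably has value $\bigOmega{1/(\eta T)}$ from a bounded start. For the statistical term $\bigOmega{1/n}$ I use a one-dimensional rarity gadget: $f_{\mathrm{stat}}(w,z) = \tfrac{z}{2}w^2$ with $z \sim \mathrm{Bern}(1/n)$, so $w^* = 0$ and $F(w) = \tfrac{1}{2n}w^2$. With probability $(1-1/n)^n = \bigOmega{1}$ every sampled point has $z_i = 0$, hence $F_S \equiv 0$ and every (stochastic) gradient vanishes, so both GD and SGD remain at the initialization $w_1$ with $|w_1| = \bigOmega{1}$; this contributes $\bigOmega{1}\cdot\tfrac{1}{2n}w_1^2 = \bigOmega{1/n}$ to the expected excess risk, uniformly in $\eta$ and $T$.

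The overfitting term $\bigOmega{\eta T/n^2}$ is the crux, and it is genuinely high-dimensional. The key obstacle, and the reason the realizable rate sits a full factor $1/n$ below the non-realizable $\eta T/n$, is that realizability forces $w^*$ to minimize the empirical objective as well, since $F_S(w)\ge \tfrac1n\sum_i \min_{u} f(u,z_i) = F_S(w^*)$; thus there is no persistent $\Theta(1/\sqrt n)$ empirical bias at $w^*$ to drift along, and overfitting can only be produced along directions that the finite sample leaves under-determined while the population still penalizes them. My plan for $f_{\mathrm{of}}$ is to place the iterate in such an under-determined subspace and adapt the anti-concentration mechanism of Theorem~\ref{thm: lb-sco-gd}: I would show that with probability $\bigOmega{1}$ the sample induces a persistent tilt that drives the averaged iterate a displacement $\bigOmega{\eta T/n}$ through a region where the empirical loss is (Huber-)flat but the population loss retains slope $\Theta(1/n)$, for a population risk $\bigOmega{\eta T/n^2}$. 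Here both the displacement and the population slope are a factor $\sqrt n$ smaller than in the non-realizable construction, reflecting the weak-growth damping (Lemma~\ref{lemma: growth-condition}) of gradients near the shared optimum. The constraint $\eta T = \cO(n)$ is exactly what keeps the displacement inside the flat region (of size $\Theta(1)$ at $\eta T = n$), so the bound does not saturate.

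I expect two difficulties in the overfitting block. First, making the drift persist for $\Theta(T)$ steps against the restoring force implied by realizability: I must engineer the geometry — for instance per-sample coordinates supplying the under-determined directions, coupled to a dense readout that carries the population cost — so that the sample-induced tilt is not immediately cancelled, and then verify the displacement and risk scale as claimed via a careful anti-concentration and induction argument on the coordinates. Second, SGD requires separate handling because its fresh sampling noise adds a diffusion term on top of the sample-level bias; I would confirm that this noise neither washes out nor dominates the $\bigOmega{\eta T/n^2}$ signal, which I expect to follow from the same growth-condition control on the per-step variance. The remaining steps — checking $1$-smoothness, convexity, and realizability of each gadget and assembling the direct sum — are routine.
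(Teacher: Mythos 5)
Your overall scaffolding is the same as the paper's: the theorem is assembled from three separate gadgets in orthogonal blocks, the $\bigOmega{1/(\eta T)}$ block is exactly Lemma~\ref{lemma: lb-suboptimality}, and your one-dimensional $\mathrm{Bern}(1/n)$ gadget for the $\bigOmega{1/n}$ term is a valid (and simpler) substitute for the paper's $2n$-coordinate version in Lemma~\ref{lemma: lb-sample}. The problem is the $\bigOmega{\eta T/n^2}$ block, which you correctly call the crux and which is the entire technical content of this theorem: you have not constructed it, and the mechanism you sketch is in tension with the obstacle you yourself identify. In a realizable smooth problem every per-sample gradient vanishes at the common minimizer $w^*$, so $\nabla F_S(w^*)=0$ exactly for every sample $S$; there is no random sum of i.i.d.\ per-sample biases to anti-concentrate, and the mechanism of Theorem~\ref{thm: lb-sco-gd} (the event $\sum_i z_i \le -\sqrt{n}/2$ producing a persistent $\Theta(\eta/\sqrt{n})$ per-step drift) has nothing to attach to. Your proposed fix --- a Huber-flat empirical region with population slope $\Theta(1/n)$ traversed by a tilt-driven displacement $\bigOmega{\eta T/n}$ --- is also internally strained: if the empirical loss is flat the iterate does not move, and if it is not flat you must fight the restoring force; you flag both issues as ``difficulties'' but resolve neither.

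The paper's construction uses a qualitatively different source of drift. The gadget is the coupled quadratic $g(w,z{=}i)=\tfrac12(\sqrt{\alpha}x-y(i))^2$ with $\alpha=C/(\eta T)$, $w=(x,y)\in\bbR^{1+n}$, $z\sim\mathrm{Unif}([n])$. The drift is \emph{deterministic}, not anti-concentration--based: the coordinate $x$ is initialized at $1$ (far from $w^*=0$) and decays only at rate $\alpha\eta=C/T$ per step, so $x_t=\Theta(1)$ throughout, and the cross term pumps each $y(i)$ up by $\eta\sqrt{\alpha}x_t/n$ per step against a pull-back of rate $\eta/n$, yielding $y_T=\bigOmega{\sqrt{\eta T}/n}$ and hence population risk $\tfrac{1}{2n}\|y_T-\sqrt{\alpha}x_T\vone\|^2=\bigOmega{\eta T/n^2}$ (the risk is quadratic in the displacement, not linear with slope $1/n$ as in your accounting). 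The randomness over $S$ enters only through a combinatorial event --- that the $n$ samples hit each of the $n$ coordinates of $y$ exactly once --- which has probability $n!/n^n=\Theta(\sqrt{n}e^{-n})$ and is boosted to probability $\ge 1/2$ by taking $m=\Theta(e^n/\sqrt{n})$ independent copies of the gadget and applying the second-moment method (Lemma~\ref{lemma: supp-2}), with the initialization placing mass only on the successful copy. None of these three ingredients (the $x$-to-$y$ pumping, the permutation event, the exponential replication) appears in your proposal, so the key term remains unproved.
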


It is worth noting that we assume bounded initialization $\| w_1 - w^* \| = \bigO{1}$. This is standard and necessary in the generalization literature: the bound will be vacuous and arbitrarily bad if initial point is away from the optimal point with infinite distance.

Similar to the non-realizable setting, the term $\bigOmega{1/(\eta T)}$ reflects the optimization error. In the meanwhile, the term $\bigOmega{1/n}$ comes from a universal hard instance that holds for any deterministic or stochastic gradient methods. The major challenge in lower bound construction is the proof for the term $\bigOmega{\eta T/n^2}$.  Notice that the term $\bigOmega{1/n}$ does not suggest the rest two terms are vacuous since they are hard in the sense of characterizing the relationship between $\eta$, $T$ and $n$. 

Theorem~\ref{thm: lb-1} suggests that known upper bounds are tight not only in sample complexity but also in $T$ and $\eta$ dependence. More specifically, the lower bound for GD tightly matches the upper bound in \cite{nikolakakis2022beyond}, and the lower bound for SGD almost tightly matches the lower bound in \cite{lei2020fine} up to a $\eta$ factor in the second term. Please refer to Table~\ref{tab: summary} for a comparison.

 We will combine the discussion for GD and SGD due to their similarity. Both upper and lower bounds are non-vacuous only when $T = \Theta(n)$ and $\eta = \Theta(1)$: under this configuration, we obtain the optimal sample complexity lower bound $\bigOmega{1/n}$ from Theorem~\ref{thm: lb-1}, which matches the sample complexity upper bound $\bigO{1/n}$ under the regime of $\eta T = \bigO{n}$. We will see in the next subsection that the conclusion is different when $\eta T = \bigOmega{n}$.



\subsection{Realizable Setting:  \texorpdfstring{$\eta T = \bigOmega{n}$}{eta T=Omega(n)}} \label{sec: t_larger_than_n}
In this subsection we focus on the case that allows large or infinite training horizon $\eta T = \bigOmega{n}$. We provide lower bounds for different algorithms and discuss their relationship with upper bounds, as stated in the following theorem.
\begin{theorem} \label{thm: lb-2}
For every $\eta > 0$, $T > 1$, if condition $\eta T = \bigOmega{n}$ holds, then there exists a convex, $1$-smooth and realizable $f(w, z): \bbR^d \to \bbR$ for every $z \in \cZ$, and a distribution $D$ such that, with a bounded initialization $\| w_1 - w^* \| = \bigO{1}$, the output $w_{\gd}$ for GD satisfies
\begin{align*}
    \bbE[F(w_{\gd})] - F(w^*) = \bigOmega{\frac{1}{\eta T} + \frac{1}{n}}.
\end{align*}
Similarly, the output $w_{\sgd}$ for SGD satisfies
\begin{align*}
    \bbE[F(w_{\sgd})] - F(w^*) = \bigOmega{\frac{1}{\eta T} + \frac{1}{n}}.
\end{align*}
\end{theorem}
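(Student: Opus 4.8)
The plan is to build a \emph{separable} hard instance whose objective splits across two orthogonal blocks of coordinates, so that both GD and SGD decouple into independent dynamics on the two blocks and the excess risk is the sum of the two blocks' contributions. On the first block I will place an optimization-hard gadget forcing error $\bigOmega{1/(\eta T)}$, and on the second block a statistical-hard gadget forcing error $\bigOmega{1/n}$. Since each gadget is convex, $1$-smooth, realizable, and started within $\bigO{1}$ of its own optimum, the direct sum inherits all these properties (the Hessian is block-diagonal, so smoothness is the max of the two blocks' smoothness), and the total excess risk is $\bigOmega{1/(\eta T) + 1/n}$. I note in passing that this construction needs no assumption on $\eta T$; the hypothesis $\eta T = \bigOmega{n}$ only marks the regime of interest, where $1/(\eta T) \le \bigO{1/n}$ and the bound reads $\Theta(1/n)$.

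For the $\bigOmega{1/(\eta T)}$ block I take a single scalar, $z$-independent quadratic $g(v) = \tfrac{\alpha}{2} v^2$ with curvature $\alpha = 1/(\eta T) \le 1$ and initialization $v_1 = \bigOmega{1}$; being independent of $z$ it is trivially realizable with optimum $v^* = 0$. Both GD and SGD reduce here to the deterministic recursion $v_{t+1} = (1 - \eta\alpha) v_t$, so $v_t = (1-\eta\alpha)^{t-1} v_1$ and the averaged iterate is $\bar v_T = \frac{v_1}{T}\cdot\frac{1 - (1-\eta\alpha)^T}{\eta\alpha}$. Choosing $\alpha = 1/(\eta T)$ makes $\eta\alpha T = 1$, whence $(1-\eta\alpha)^T \to e^{-1}$ and $\bar v_T = \Theta(v_1) = \Theta(1)$, giving $g(\bar v_T) - g(v^*) = \tfrac{\alpha}{2}\bar v_T^2 = \bigOmega{1/(\eta T)}$. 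This is the standard optimization lower bound, the realizable analogue of the $\bigOmega{1/(\eta T)}$ term isolated in the non-realizable proofs.

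The heart of the argument is the universal statistical block, which I realize as a ``coupon-collector'' construction in dimension $d = n$. Let $\cZ = [n]$, let $D = \text{Unif}([n])$, set the target $w^* = a\vone$ with $a = c/\sqrt n$, and define $f(w, i) = \tfrac12 (w_i - a)^2$, which is convex, $1$-smooth, and realizable (every $f(\cdot, i)$, hence $F$, is minimized at $w^*$ with $F(w^*) = 0$), while $\|w_1 - w^*\| = a\sqrt n = c = \bigO{1}$ from the initialization $w_1 = 0$. The key observation is that $\nabla f(w, i) = (w_i - a)\ve_i$ is supported on the single coordinate $i$; consequently every empirical gradient $\nabla F_S$ and every stochastic gradient $\nabla f(\cdot, z_{i_t})$ is supported on the set $V = \{z_1, \dots, z_n\}$ of \emph{observed} coordinates. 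Therefore, \emph{for any} $\eta$ and $T$, each coordinate $i$ in the unobserved set $U = [n] \setminus V$ is never updated and stays frozen at its initial value $0$, so $w_{T,i} = \bar w_{T,i} = 0$ for $i \in U$. This yields the pointwise bound
\begin{equation*}
    F(\bar w_T) - F(w^*) \;\ge\; \frac{1}{2n}\sum_{i \in U}(0 - a)^2 \;=\; \frac{a^2}{2n}\,|U|,
\end{equation*}
and taking expectation over $S \sim D^n$ together with $\bbE|U| = n(1 - 1/n)^n \ge n/4$ gives $\bbE[F(\bar w_T)] - F(w^*) \ge \tfrac{a^2}{8} = \bigOmega{1/n}$.

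Finally I combine the two gadgets on $\bbR^{n} \times \bbR$ by setting $f((w,v), i) = \tfrac12(w_i - a)^2 + \tfrac{\alpha}{2}v^2$ over $\cZ = [n]$, $D = \text{Unif}([n])$; this is convex, $1$-smooth, realizable, with $\|(w_1,v_1) - w^*\| = \bigO{1}$. Because the objective is separable and the Hessian diagonal, the $v$-dynamics and the $w$-dynamics evolve independently under both GD and SGD, so the excess risk is exactly the sum of the two contributions, $\bigOmega{1/(\eta T) + 1/n}$, proving the claim for both algorithms at once. The main obstacle is the statistical block: one must (i) verify the ``frozen coordinate'' property holds verbatim for \emph{both} GD and SGD and for the time-averaged outputs, and (ii) balance the two competing constraints — a bounded initialization $\|w_1 - w^*\| = \bigO{1}$ forces $a = \bigO{1/\sqrt n}$, while a nontrivial per-coordinate error together with a constant unobserved fraction forces $d = n$ — so that the product $a^2 \cdot \bbE|U| / d$ lands exactly at $\Theta(1/n)$ rather than at the trivial $\Theta(1)$ one would get from a naive high-dimensional instance.
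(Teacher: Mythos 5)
Your proposal is correct and follows essentially the same route as the paper, which proves Theorem~\ref{thm: lb-2} by combining a frozen-coordinate ``coupon collector'' instance giving $\bigOmega{1/n}$ (Lemma~\ref{lemma: lb-sample}) with a slowly-curved quadratic giving the $\bigOmega{1/(\eta T)}$ optimization term (Lemma~\ref{lemma: lb-suboptimality}); your observation that the hypothesis $\eta T = \bigOmega{n}$ is not actually needed for this bound also matches the paper's structure. The only cosmetic differences are that you merge the two gadgets into a single separable instance (the paper keeps them as separate lemmas, which suffices since $\bigOmega{a+b}$ and $\bigOmega{\max\{a,b\}}$ coincide up to constants) and you use dimension $n$ with a shifted optimum rather than the paper's dimension $2n$ with nonzero initialization.
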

Here,  GD and SGD again have the same upper and lower bounds. Theorem~\ref{thm: lb-2} indicates that, different from the case $\eta T = \bigO{n}$, our lower bound for both GD and SGD does not match the corresponding upper bounds in \cite{lei2020fine,nikolakakis2022beyond} (see Table~\ref{tab: summary}). For lower bound, we achieve best sample complexity $\Omega(1/n)$ as long as $\eta T \geq n$, whereas for upper bound, the best sample complexity $O(1/n)$ is obtained \emph{only} when we set $\eta T = n$. To conclude, despite the lower bound of sample complexity certifies the optimality of its upper bound, the upper and lower bound still suggest different behavior of generalizarion error: while the upper bound indicates longer training leads to overfit, the lower bound suggests the opposite under the realizable setting.

To obtain the lower bound in Theorem~\ref{thm: lb-2} we employ a strategy similar to the proof of Theorem~\ref{thm: lb-1}. Term $\bigOmega{1/n}$ comes from the universal sample hardness for any algorithm, and term $\bigOmega{1/(\eta T)}$ is obtained from the construction used to prove $\bigOmega{\eta T/n}$ in Theorem~\ref{thm: lb-1}. Albeit the identical construction, the difference between bounds comes from lower bounding $1 - (1-\eta/n)^T$ in two regimes: when $\eta T = \bigO{n}$, we have $1 - (1-\eta/n)^T = \bigOmega{1}$ and when $\eta T = \bigOmega{n}$, we have $1 - (1-\eta/n)^T = \bigOmega{\eta T/n}$. This then leads to a difference in the absolute value of each  coordinates and in the end the difference of overall lower bounds. The details are postponed to Appendix~\ref{appendix: lb-gd-t}.

We conjecture the sample complexity bound under a large or infinite time horizon can be closed by proving upper bound $\bigO{1/n}$ is achievable for GD even when $\eta T$ goes to infinity. We will discuss the conjecture and provide several evidences in Section~\ref{sec: infinite}.

\section{Upper Bounds under \texorpdfstring{$\eta T = \bigOmega{n}$}{\eta T=Omega(n)}} \label{sec: infinite}
In Section~\ref{sec: lb}, we establish lower bounds for both realizable and non-realizable cases. For non-realizable losses, both the upper and lower bounds are tight regardless of the relationship between $T$ and $n$. The result differs for the realizable cases: while upper bound results suffer from large training time, our lower bounds say that overfitting does not happen. It is natural to ask
\begin{center}
    \it Can we close the gap between upper and lower bounds\\ for realizable SCO when $\eta T$ goes to infinity?
\end{center}
We conjecture that the above problem can be tackled by proving GD and SGD can achieve $\bigO{1/n}$ even when $\eta T$ goes to infinity. In the section, we provide evidences to support the conjecture: we consider the examples of one-dimensional function and linear regression. On both examples, $\Theta(1/n)$ sample complexity is achieved for GD and SGD when $\eta T$ is large.
\subsection{One-dimensional Feasibility}
We support our conjecture by providing a first evidence in dimension one: under $d = 1$, we close the gap between upper and lower bound by establishing $\Theta(1/n)$ sample complexity in the rest part of the subsection.

We start by presenting Lemma~\ref{lemma: dim-1-ub-sgd}, which establishes an upper bound for SGD based on the result by \citet{lei2020fine}. 
\begin{lemma} \label{lemma: dim-1-ub-sgd}
In dimension one, if $f(w,z)$ is convex, $1$-smooth and realizable with $z \sim D$, then for every $\eta = \Theta(1)$, there exists $T_0 = \Theta(n)$ such that for $T \geq T_0$, the output $w_{\sgd}$ of SGD satisfies
\begin{equation*}
    \bbE[ F(w_{\sgd})] - F(w^*) = \bigO{\frac{1}{n}}.
\end{equation*} 
\end{lemma}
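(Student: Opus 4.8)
The plan is to split the excess risk into an empirical optimization error and a generalization gap,
\[
\bbE[F(w_{\sgd})] - F(w^*) = \underbrace{\bbE[F_S(w_{\sgd}) - F_S(w^*)]}_{\text{optimization}} + \underbrace{\bbE[F(w_{\sgd}) - F_S(w_{\sgd})]}_{\text{generalization}},
\]
where I use $\bbE_S[F_S(w^*)] = F(w^*)$ because $w^*$ is independent of $S$. The optimization term is handled by a standard argument that benefits from realizability and actually improves with large $T$; the generalization term is where the one-dimensional structure must be exploited to avoid the $\eta T/n^2$ blow-up present in \citet{lei2020fine}.

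For the optimization term I would expand $\bbE[(w_{t+1}-w^*)^2\mid w_t]$, bound the cross term by convexity via $(w_t-w^*)F_S'(w_t) \ge F_S(w_t)-F_S(w^*)$, and bound the second moment $\bbE[f'(w_t,z_{i_t})^2\mid w_t] = \tfrac1n\sum_i f'(w_t,z_i)^2$ using the weak growth condition of Lemma~\ref{lemma: growth-condition} by $2(F_S(w_t)-F_S(w^*))$. For $\eta$ a constant bounded away from $1$ this gives the one-step inequality $\bbE[(w_{t+1}-w^*)^2\mid w_t] \le (w_t-w^*)^2 - 2\eta(1-\eta)(F_S(w_t)-F_S(w^*))$. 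Telescoping and Jensen then yield $\bbE[F_S(w_{\sgd})-F_S(w^*)] = \bigO{1/(\eta T)}$, which is $\bigO{1/n}$ once $T\ge T_0 = \Theta(n)$. Moreover, the same computation with the co-coercivity inequality $f'(w_t,z_{i_t})(w_t-w^*)\ge f'(w_t,z_{i_t})^2$ shows $(w_t-w^*)^2$ is non-increasing \emph{deterministically}, so the iterates never escape a bounded region however large $T$ is; this non-expansiveness is what makes the $T\to\infty$ regime tractable.

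The heart of the proof, and the main obstacle, is bounding the generalization term by $\bigO{1/n}$ \emph{uniformly in $T$}. Here I would use that in dimension one each realizable $f(\cdot,z)$ is minimized on an interval $[a(z),b(z)]$ containing $w^*$ (Definition~\ref{def: realizable}), so the empirical interpolation set is the interval $[\max_i a(z_i),\min_i b(z_i)]$, again containing $w^*$, and the non-expansive dynamics drive the iterates to its boundary rather than away. The excess risk of any point $\hat w$ in this interval equals $\bbE_z[f(\hat w,z)-\min_u f(u,z)]$, which is nonzero only for fresh samples $z$ whose minimizer interval fails to contain $\hat w$; bounding such contributions by $\tfrac12(\hat w-b(z))^2$ via smoothness, and identifying the event that $z$ is ``more extreme than all of $S$'' with the fresh draw being an extreme order statistic among $n+1$ i.i.d.\ endpoints, a leave-one-out/exchangeability argument produces a factor $1/(n+1)$ times the expected squared spacing of the two extreme endpoints, which is $\bigO{1}$ on a bounded domain. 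This delivers $\bigO{1/n}$ \emph{independent of $T$}, in sharp contrast with the growing stability bound.

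The remaining step is to transfer this clean bound, which is most naturally stated for the limiting interpolator, to the finite-$T$ averaged iterate $w_{\sgd}$, and I expect this to be the delicate part. I would combine the non-expansiveness (which confines the late iterates to a shrinking neighborhood of the empirical interpolation interval) with the optimization bound (which forces $F_S(w_{\sgd})-F_S(w^*)=\bigO{1/n}$, so that $w_{\sgd}$ is a near-interpolator) and argue that the generalization gap of a near-interpolator differs from that of an exact interpolator by a quantity controlled by the same optimization error. Choosing $T_0 = \Theta(n)$ so that $1/(\eta T_0)=\Theta(1/n)$ then makes all contributions $\bigO{1/n}$ simultaneously for every $T \ge T_0$, completing the argument.
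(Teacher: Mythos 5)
Your route is genuinely different from the paper's. The paper's proof is two short steps: it invokes Theorem~4 of \citet{lei2020fine} at $T_0=\Theta(n)$, where the stability-based bound $\bigO{1/T_0 + (1+T_0/n)/n}$ is already $\bigO{1/n}$, and then uses the one-dimensional structure only to extend from $T_0$ to arbitrary $T\ge T_0$: by the mean value theorem and realizability, $w_{t+1}-w^* = (1-\eta \nabla^2 f(\xi,z_{i_t}))(w_t-w^*)$ with $0\le \eta\nabla^2 f\le 1$, so $w_{t+1}$ is a convex combination of $w_t$ and $w^*$, hence $F(w_{t+1})\le F(w_t)$ pathwise and the risk cannot deteriorate after $T_0$. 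You possess exactly this 1D ingredient (your deterministic non-expansiveness via co-coercivity), but you deploy it only to confine the iterates to a bounded region, and instead attempt to re-derive the entire bound from scratch via an optimization/generalization decomposition plus a leave-one-out order-statistics argument. That ambition is unnecessary here: the known stability bound is only problematic when $\eta T$ is large, and monotonicity of $F$ along the trajectory reduces the large-$T$ case to $T=T_0$.

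The from-scratch route also has a genuine gap, which you yourself flag: the transfer from exact interpolators to $w_{\sgd}$. Your order-statistics argument controls $\bbE_z[f(\hat w,z)-\min_u f(u,z)]$ only for $\hat w$ \emph{inside} the empirical interpolation interval $[\max_i a(z_i),\min_i b(z_i)]$. But $F_S(w_{\sgd})-F_S(w^*)=\bigO{1/n}$ gives no control on the distance from $w_{\sgd}$ to that interval: smoothness upper-bounds $f(\cdot,z_i)$ by squared distance to its minimizing set but provides no matching lower bound, so a near-interpolator in function value can sit arbitrarily far from the interpolation interval (take $f(\cdot,z_i)$ extremely flat near its minimizers). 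Likewise, non-expansiveness confines $w_t$ to the ball $\{w: |w-w^*|\le|w_1-w^*|\}$, not to a ``shrinking neighborhood of the empirical interpolation interval''; the iterates need never enter that interval in finite time. So the last step, as sketched, does not close. The cleanest repair is the paper's: strengthen your non-expansiveness to $F(w_{t+1})\le F(w_t)$ (convexity of $F$ plus the fact that $w_{t+1}$ lies between $w^*$ and $w_t$ in dimension one), deduce $\bbE[F(\bar w_T)]\le \frac{1}{T_0}\sum_{t\le T_0}\bbE[F(w_t)]$ for all $T\ge T_0$ by Jensen and monotone averaging, and bound the right-hand side by the existing $\bigO{1/n}$ result at $T_0=\Theta(n)$.
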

\begin{proof}
From Theorem~4 in \citet{lei2020fine}, it holds that for realizable cases (we rescale it to $f(w^*, z) = 0$ for each $z$) with step-size $\eta =  \Theta(1)$, it holds that
\begin{equation}
    \bbE[F(w_{\sgd})] = \bigO{\frac{1}{T_0} + \frac{1 + T_0/n}{n}}.
\end{equation}
Therefore, for $T_0= \Theta(n)$, it holds that
\begin{equation*}
    \bbE[F(w_{\sgd})] = \bigO{\nfrac{1}{n}}.
\end{equation*}
For SGD, the iteration formulates the iterate 
\begin{equation*}
    w_{t+1} = w_t - \eta \nabla f(w_t, z_{i_t}),
\end{equation*}
where $z_{i_t}$ is uniformly chosen from $S$. Under the realizable and convex assumption, for any $z_{i_t} \in \cZ$, the iteration becomes
\begin{equation*}
    w_{t+1} - w^* = (1 - \eta \nabla^2 f(\xi, z_{i_t})) (w_t - w^*),
\end{equation*}
using mean value theorem, where $\xi$ is a point between $w_t$ and $w^*$. This indicates that the distance $w_t - w^*$ shrinks in each step for any $z_{i_t} \in \cZ$. Due to the convexity of $F$, it holds that 
$F(w_{t+1}) \leq F(w_t)$. In summary, for any $T \geq T_0$, it holds that 
\begin{equation*}
    \bbE[F(w_{\sgd})] - F(w^*) = \bigO{\nfrac{1}{n}}.
\end{equation*} 
\end{proof}
A similar result can be established for GD, as in the next lemma. Its proof is similar and hence postponed to Appendix~\ref{appendix: infinite}.
\begin{lemma} \label{lemma: dim-1-ub-gd}
In dimension one, if $f(w,z)$ is convex, $1$-smooth and realizable with $z \sim D$, then for every $\eta = \Theta(1)$, there exists $T_0 = \Theta(n)$ such that for $T \geq T_0$, the output $w_{\gd}$ of GD satisfies
\begin{equation*}
    \bbE[ F(w_{\gd})] - F(w^*) = \bigO{\frac{1}{n}}.
\end{equation*} 
\end{lemma}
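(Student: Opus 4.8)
The plan is to follow the same two-step template as the proof of Lemma~\ref{lemma: dim-1-ub-sgd}, replacing the per-sample SGD update by the full-batch GD update. First I would establish the desired $\bigO{1/n}$ bound at a single horizon $T_0 = \Theta(n)$ by invoking a known realizable GD upper bound, and then I would show that in dimension one the averaged-iterate risk $F(w_{\gd})$ can only \emph{decrease} as training continues past $T_0$, so that the bound persists for every $T \ge T_0$.

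For the first step, I would quote the realizable GD upper bound of \citet{nikolakakis2022beyond} (derived in Appendix~\ref{appendix: gd-ub}), namely
\begin{equation*}
    \bbE[F(\bar{w}_{T_0})] - F(w^*) = \bigO{\frac{1}{\eta T_0} + \frac{1}{n} + \frac{\eta T_0}{n^2}}.
\end{equation*}
Since this holds in every dimension, it holds in particular for $d = 1$; plugging in $\eta = \Theta(1)$ and $T_0 = \Theta(n)$ collapses all three terms to $\bigO{1/n}$.

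For the second (and main) step, I would exploit the one-dimensional geometry exactly as in the SGD argument. Realizability gives $\nabla f(w^*, z) = 0$ for every $z$, hence $\nabla F_S(w^*) = \frac{1}{n}\sum_{i=1}^n \nabla f(w^*, z_i) = 0$, so $w^*$ also minimizes the empirical objective $F_S$. The mean value theorem then yields
\begin{equation*}
    w_{t+1} - w^* = \bigl(1 - \eta\, \nabla^2 F_S(\xi)\bigr)(w_t - w^*)
\end{equation*}
for some $\xi$ between $w_t$ and $w^*$, and convexity together with $1$-smoothness forces $\nabla^2 F_S(\xi) \in [0,1]$. Taking the constant in $\eta = \Theta(1)$ to satisfy $\eta \le 1$, the contraction factor lies in $[0,1]$, so $|w_t - w^*|$ is non-increasing and the sign of $w_t - w^*$ is preserved: all iterates lie on one side of $w^*$ and approach it monotonically. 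Because each new iterate is then at least as close to $w^*$ as the average of the preceding ones, the running average $\bar{w}_t$ inherits the same one-sided, monotone-approach behavior, and convexity of $F$ (whose minimizer is $w^*$) gives that $F(\bar{w}_t)$ is non-increasing in $t$. Consequently $F(\bar{w}_T) \le F(\bar{w}_{T_0})$ for all $T \ge T_0$, and taking expectations and subtracting $F(w^*)$ yields $\bbE[F(w_{\gd})] - F(w^*) = \bigO{1/n}$.

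The only genuinely delicate point — and the place I would be most careful — is that the guarantee concerns the \emph{time average} rather than the last iterate, so monotone contraction of $|w_t - w^*|$ does not by itself suffice; one must additionally verify that appending a strictly closer point to a one-sided running average moves that average toward $w^*$. This is where sign preservation is needed, which in turn is what pins the step-size to $\eta \le 1$ (rather than merely $\eta < 2$); everything else is either black-box or a routine one-line computation.
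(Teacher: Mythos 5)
Your proposal follows essentially the same two-step route as the paper's own proof: invoke the realizable GD upper bound of \citet{nikolakakis2022beyond} at $T_0=\Theta(n)$, then use the one-dimensional mean-value-theorem contraction $w_{t+1}-w^* = (1-\eta\,\nabla^2 F_S(\xi))(w_t-w^*)$ to argue the risk cannot increase for $T\ge T_0$. You are in fact more careful than the paper on the one delicate point: the paper only asserts $F(w_{t+1})\le F(w_t)$ for the individual iterates and silently transfers this to the averaged output $\bar{w}_T$, whereas your sign-preservation argument (which is what pins the step-size to $\eta\le 1$) is exactly what is needed to make the monotonicity of $F(\bar{w}_t)$ rigorous.
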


Unfortunately, we cannot employ the same technique to extend the result to high-dimensional case. However, we show that the gap can be closed for the special case of linear regression in the high-dimensional regime in next subsection.

\subsection{Linear Regression}
In this subsection, we demonstrate that when $\eta T = \bigOmega{n}$, $\Theta(1/n)$ can be achieved on \emph{linear regression} problem, whatever underparameterized ($ d < n$) or overparameterized ($ d \geq n$). 
In realizable (or noiseless) linear regression problems, the $i$-th sample $z_i = (x_i, y_i)$ in dataset $S= \{z_1, \dots, z_n \}$ satisfies that $y_i = x_i^\top w^*$ and $x_i$ is i.i.d. drawn from an unknown distribution. Under the linear predictor $x_i^\top w$, the loss term is defined as $f(w,z) = (y_i - x_i^\top w)^2$. Under this regime, a bounded feature $\|x\| = \bigO{1}$ suffices to guarantee that $f(w, z)$ is convex, $\bigO{1}$-smooth, and realizable. 
In this case, the upper bound would be $\bigO{1/n}$ and the lower bound would be ${\cO}(\log^3n/n)$, which is optimal up to a $\log$-factor. 

We begin by presenting Lemma~\ref{lemma: ub-regression}, which establishes an upper bound using local Rademacher Complexity.
\begin{lemma}[From \citet{srebro2010optimistic}]
\label{lemma: ub-regression}
    In the realizable linear regression cases, for every $\eta > 0$ and $T > 1$, if the feature $x_i$ is bounded, it holds that for the output of SGD
\begin{align*}
    & \bbE[F(w_{\sgd})] - F(w^*) = \bigO{\frac{1}{\eta T} + \frac{\log^3 n}{n} }, 
    \end{align*}
    and also the output for GD
    \begin{align*}
    & \bbE[F(w_{\gd})] - F(w^*) = \bigO{\frac{1}{\eta T} + \frac{\log^3 n}{n}}.
\end{align*}
\end{lemma}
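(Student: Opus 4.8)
The plan is to split the excess risk into an \emph{empirical optimization} term and a \emph{generalization} term, and to control the latter with the optimistic (local Rademacher) rate for smooth nonnegative losses from \citet{srebro2010optimistic}. Because the problem is realizable, $f(w^*, z) = 0$ for every $z$, so $F(w^*) = 0$ and the target reduces to bounding $\bbE[F(w_{\text{out}})]$ for $w_{\text{out}} \in \{w_{\gd}, w_{\sgd}\}$. For any fixed $w$ we write $F(w) = \left(F(w) - F_S(w)\right) + F_S(w)$, so it suffices to (i) bound the empirical risk $F_S(w_{\text{out}})$ attained by the algorithm, and (ii) bound the generalization gap $F(w_{\text{out}}) - F_S(w_{\text{out}})$ uniformly over the region where $w_{\text{out}}$ lives. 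The crucial point for the overparameterized regime $d \ge n$ is that the complexity measure must be \emph{dimension-free}, which is exactly what a norm-based Rademacher complexity over a fixed $\ell_2$-ball provides.

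First I would show the iterates stay in a fixed ball. Writing the per-step map $w \mapsto w - \eta \nabla f(w, z)$, realizability gives $\nabla f(w^*, z) = 0$ for every $z$, and for a convex $\bigO{1}$-smooth $f$ with $\eta = \Theta(1)$ small enough this map is nonexpansive about $w^*$; hence $\|w_t - w^*\| \le \|w_1 - w^*\| = \bigO{1}$ for all $t$, and by convexity the same holds for the averages. Consequently $w_{\text{out}}$ belongs to the data-independent ball $\cW_B = \{w : \|w - w^*\| \le B\}$ with $B = \bigO{1}$, on which the loss is bounded since $\|x\| = \bigO{1}$ forces $f(w,z) = (x^\top(w - w^*))^2 = \bigO{1}$. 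For the empirical optimization term, standard averaged-iterate analysis for smooth convex functions yields $F_S(w_{\gd}) - F_S(w^*) \le \|w_1 - w^*\|^2 / (2\eta T) = \bigO{1/(\eta T)}$, and since $F_S(w^*) = 0$ this is $F_S(w_{\gd}) = \bigO{1/(\eta T)}$; for SGD the interpolation property of Lemma~\ref{lemma: growth-condition} makes the stochastic-gradient variance vanish at the optimum, so constant-step SGD enjoys the same rate $\bbE[F_S(w_{\sgd})] = \bigO{1/(\eta T)}$.

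Next I would invoke the optimistic rate. Over the fixed class $\{x \mapsto x^\top w : w \in \cW_B\}$ the worst-case Rademacher complexity is $\bigO{BR/\sqrt{n}} = \bigO{1/\sqrt{n}}$, which is dimension-free. Applying the smooth-loss bound of \citet{srebro2010optimistic} then gives, with probability $1 - \delta$ and simultaneously for all $w \in \cW_B$,
\begin{equation*}
    F(w) - F_S(w) = \bigO{\sqrt{F_S(w) \cdot \frac{\log^3 n}{n}} + \frac{\log^3 n}{n}},
\end{equation*}
where the $\log^3 n$ factor arises from converting the worst-case complexity into a local (sub-root) complexity by peeling over scales. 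Substituting $F_S(w_{\text{out}}) = \bigO{1/(\eta T)}$ and using AM-GM on the cross term, $\sqrt{\frac{1}{\eta T} \cdot \frac{\log^3 n}{n}} \le \tfrac12(\frac{1}{\eta T} + \frac{\log^3 n}{n})$, collapses everything into $\bigO{1/(\eta T) + \log^3 n / n}$. Finally, taking $\delta = 1/n$ and using the uniform boundedness of the loss to absorb the failure event converts the high-probability statement into the claimed bound in expectation; the argument is identical for $w_{\gd}$ and $w_{\sgd}$.

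The main obstacle is the generalization step: obtaining a \emph{dimension-free}, \emph{optimistic} (i.e.\ $\sqrt{F_S}$-scaled) generalization bound with the correct polylogarithmic factor. This is precisely what we import from \citet{srebro2010optimistic}; the nontrivial work on our side is verifying its hypotheses in this setting — nonnegativity and $\bigO{1}$-smoothness of the squared loss on the bounded region, together with boundedness of the iterates, so that the fixed norm ball $\cW_B$ rather than a dimension-dependent set controls the complexity, which is what keeps the bound valid in the overparameterized case $d \ge n$.
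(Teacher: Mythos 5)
Your proposal is correct and follows essentially the same route as the paper: the paper's (very terse) proof is exactly ``apply Theorem~1 of \citet{srebro2010optimistic} with the realizability assumption and the $\bigO{1/\sqrt{n}}$ dimension-free Rademacher complexity of the bounded-norm linear class,'' and your decomposition into an $\bigO{1/(\eta T)}$ empirical optimization term plus an optimistic $\sqrt{F_S(w)\,\log^3 n/n}+\log^3 n/n$ generalization term, absorbed via AM--GM, is precisely the detail that instantiation requires. You additionally make explicit the iterate-boundedness and high-probability-to-expectation steps that the paper leaves implicit, which is a strict improvement in rigor rather than a different argument.
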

\begin{proof}
    One could directly apply Theorem~1 in \citet{srebro2010optimistic}.
    Specifically, we plug in the realizability assumption and the Rademacher complexity of linear function class, which is in order $\bigO{1/\sqrt{n}}$ in bounded norm cases. 
\end{proof}
Lemma~\ref{lemma: ub-regression} 
established a sample complexity rate of $\Theta(1/n)$ for linear regression when $T$ grows large. Our evidence on both dimension one case and regression suggests the gap in the regime $\eta T = \bigOmega{n}$ might be closed by improving the upper bounds of excess risk or sample complexity. However, the approach do not generalize to general convex functions due to as convex functions have much larger Radamacher complexity. We hope our analysis can motivate future exploration into the topic.
\section{Proof Overviews} \label{sec: proof}
In this section we provide a brief overview regarding our technique used in the proofs of theorems for realizable cases in Section~\ref{sec: lb}. In particular, we will focus on the lower bound construction for the output of GD when $\eta T = \bigO{n}$, i.e. first part in Theorem~\ref{thm: lb-1}, to showcase the major intuition and idea behind our constructions. 

As discussed in the above section, the main technical difficulty in the GD part of Theorem~\ref{thm: lb-1} lies in proving
\begin{equation} \label{eq: lb-major}
    \bbE[F(w_{\gd})] - F(w^*) = \bigOmega{\frac{\eta T}{n^2}}.
\end{equation}
The proof of rest terms is based on easier constructions and we recommend referring to Lemma~\ref{lemma: lb-sample} and Lemma~\ref{lemma: lb-suboptimality} in Appendix~\ref{appendix: minor}. These lemmas are general and hold for any deterministic or stochastic gradient methods. Here we focus on the proof of \eqref{eq: lb-major}. Our technique is novel and inspired by the work of \citet{amir2021sgd, sekhari2021sgd}. However, their construction critically relies on nonsmoothness and nonrealizability. 

We start by considering running GD on the following 2-dimensional quadratic function
\begin{equation*}
    h(x, y) = \frac{\alpha x^2}{2} + \frac{y^2}{2} - 2\sqrt{\alpha} xy = \frac{1}{2} \left| \sqrt{\alpha} x -  y \right|^2
\end{equation*}
with step-size $\eta$ and initialization $x_1 = 1$ and $y_1 = 0$. We choose a small enough $\alpha = \nfrac{1}{\eta T} \ll 1$. In every iteration, since $\alpha$ is small, $x$ is pulled back to zero slowly: it is easy to lower bound the value since 
\begin{equation*}
    x_{t+1} \geq (1 - \alpha\eta) x_{t} \geq e^{-\alpha \eta t} x_1  \geq e^{-t/T} x_1 \geq 1/e = \Theta(1).
\end{equation*}
Hence $x_t = \bigOmega{1}$ for any $t \in [T]$. Meanwhile, 
coordinate $y$ is simultaneously (1) pushed away from zero by $x$ on the scale of $\bigOmega{\eta\sqrt{\alpha}}$ and (2) pulled back towards zero by itself. As a result, despite the pulling influence, we can still guarantee that $y_t$ is bounded away from zero for all $t \in [T]$.

We now want to improve over the naive two-dimensional quadratic example to make sure that multiple coordinates are bounded away from zero. This intuitively might provide a hard instance for the GD algorithm. Also, we hope stochasticity plays a role in the hard instance such that we can introduce the factor of $n$.  We then devise the following instance $g(w, z): \bbR^{n+1} \times \cZ \to \bbR$ belonging to the realizable smooth SCO setting:  
\begin{equation}
    g(w, z = i) = \frac{\alpha}{2} x^2 + \frac{1}{2} \big(y(i)\big)^2  - \sqrt{\alpha} x \cdot y(i) = \frac{1}{2} \left|\sqrt{\alpha}x - y(i)\right|^2
\end{equation}
where $w = (x,y)$, $x \in \bbR$, $y \in \bbR^n$ and $z \sim \text{Unif}([n])$.  We still set parameter $\alpha$ to be $1/(\eta T)$. We are given a dataset $S$ of $n$ examples i.i.d. from the distribution. This leads to population loss
\begin{equation}
    G(w) = \bbE_{z\sim\text{Unif}([n])}[g(w,z)] =  \frac{1}{2n} \left\| y - \sqrt{\alpha} x \right\|^2.
\end{equation}

We generalize the idea from the two-dimensional case to $n+1$ dimension. To this end, we need every example $z_i \in S$ corresponds to one coordinate $y(i)$. This is, however, an improbable event that occurs with probability $\Theta(\sqrt{n} \cdot e^{-n})$. We use the intuition from \citet{amir2021sgd,sekhari2021sgd}: if we consider multiple independent copies of $g(w,z)$, then with probability $\Theta(1)$, there exists at least one copy that satisfies the condition.

We focus on the particular copy only. Our calculations shows that under assumption $\eta T = \bigO{n}$, it holds that, for any $t \in [T]$, (1) $x_t = \Theta(1)$ and (2) $y_t(i) = \bigOmega{\sqrt{\eta t}/n}$ for any coordinates $i \in [n]$. With a slight abuse of the notation $w$, we put everything together and guarantee that 
\begin{align*}
    F(w_{\gd}) - F(w^*) = \bigOmega{\frac{1}{2n} \cdot \| y_T \|^2} = \bigOmega{\frac{\eta T}{n^2}}.
\end{align*}

The details in the proof of Theorem~\ref{thm: lb-1} can be found in Appendix~\ref{appendix: lb-gd-tn}. The idea behind the proof for the case $\eta T = \bigOmega{n}$ (Theorem~\ref{thm: lb-2}) differs only in calculations and hence we omit the repetition. Proof for SGD is also similar. The details of proof for other theorems can be found in Appendix~\ref{appendix: t_equal_n}.
\section{Additional Related Work}
Generalization in stochastic convex optimization has been extensively explored in the literature~\citep{boyd2004convex,shalev2014understanding}, with one-pass SGD~\citep{pillaud2018exponential}, multi-pass SGD~\citep{pillaud2018statistical,sekhari2021sgd,lei2021generalization}, DP-SGD~\citep{bassily2019private,ma2022dimension}, ERM solution~\citep{feldman2016generalization,aubin2020generalization} and so forth. One of the most famous results is that one-pass SGD can achieve an optimal error rate of $\cO(1/\sqrt{n})$ in convex optimization, even in the presence of non-smooth loss functions~\citep{nemirovskij1983problem}.

However, for realizable problems, existing analyses typically focus only on upper bounds ~\citep{lei2020fine,nikolakakis2022beyond,schliserman2022stability,taheri2023generalization} and corresponding lower bounds are lacked. Realizability is closely related to label noise, which can have a substantial impact on generalization performance~\citep{song2019does,harutyunyan2020improving,DBLP:conf/iclr/TengMY22,wen2022realistic}. 

For lower bounds,
\cite{amir2021sgd} show that no less than $\bigOmega{1/\epsilon^4}$ steps is needed for GD to achieve $\epsilon$-excess risk, whereas SGD needs only $\bigO{1/\epsilon^2}$. This is an iteration bound, whereas some other works (including our work) focus on the sample complexity bound: \cite{sekhari2021sgd} further indicate that GD suffers from a $\bigOmega{1/n^{5/12}}$ sample complexity, which is slower than the well-established bound $\Theta(1/\sqrt{n})$ for SGD \citep{nemirovskij1983problem}.
Besides the upper/lower bound mentioned above, a line of lower bounds in generalization analysis typically focuses on the failure of techniques. 
For instance, despite the optimal rate of $\cO(1/\sqrt{n})$ in convex optimization, uniform convergence only returns a lower bound of $\Omega(\sqrt{d/n})$~\citep{shalev2010learnability,feldman2016generalization}, leading to a constant lower bound in overparameterized regimes. 
A line of works further illustrate the inherent weakness of uniform convergence~\citep{nagarajan2019uniform,glasgow2022max}.
Regarding stability-based bounds, \citet{bassily2020stability} presents a lower bound under non-smooth convex losses.

To bridge the gap between lower and upper bound, a fast rate upper bound in order $O(1/n)$ is required. 
One of the most well-known fast-rate bound is local Rademacher complexity, which works well under low-noise regimes~\citep{bartlett2005local}. 
However, it typically relies on a specific function class and may not be directly applied into the general convex optimization regimes~\citep{steinwart2007fast,srebro2010optimistic,zhou2021optimistic}. 
Alternatively, stability-based analyses have shown promise and work well in convex optimization regimes, which have the potential to provide fast-rate generelization bound~\citep{bousquet2002stability,hardt2016train,feldman2019high,zhang2022stability}.
In addition to these bounds, one can also derive fast rate bound for finite-dimensional cases~\citep{lee1996importance,bousquet2002concentration}, aggregation~\citep{tsybakov2004optimal,chesneau2009adapting,dalalyan2018exponentially}, PAC-Bayesian and information-based analysis~\citep{yang2019fast,grunwald2021pac}.

\section{Conclusion}
In this work, we focus on generalization bounds under the smooth SCO setting. In particular, we provide lower bounds for excess risk as a function sample size $n$, the learning rate $\eta$ and the iteration $T$ under three settings: (1) non-realizable, (2) realizable with $\eta T = \bigO{n}$, and (3) realizable with $\eta T =  \bigOmega{n}$.  For the first two cases, our lower bounds match the corresponding upper bounds and certificate the optimal sample complexity. Nevertheless, under the realizable case with $\eta T = \bigO{n}$, we observe a gap between extisting upper bounds and lower bounds. We conjecture that this gap can be closed by improving the upper bound under the long time horizon regime, and provide evidence in the one-dimensional problem and the linear regression problem to support our hypothesis.






\newpage

\appendix

\section{Missing Proofs from Section~\ref{sec: nonrealizable}} \label{appendix: nonrealizable}
\subsection{Proof of Theorem~\ref{thm: lb-sco-gd}} \label{appendix: lb-sco-gd}
The theorem provides an excess risk lower bound $\bigOmega{\nfrac{1}{\eta T} + \nfrac{\eta T}{n}}$ for GD under the \emph{non-realizable} smooth SCO scenario. The result is obtained by combining a $\bigOmega{1/{\eta T}}$ bound in Lemma~\ref{lemma: lb-suboptimality} and a $\bigOmega{\eta T/n}$ bound in Lemma~\ref{lemma: lb-gd-overfit} stated below. The first bound reflects an optimization error and is postponed to Appendix~\ref{appendix: lb-suboptimality}. In the rest part, we present the proof of the latter lemma.
\begin{lemma} \label{lemma: lb-gd-overfit}
    For any $\eta > 0$, $T > 1$, there exists a convex, $1$-smooth $f(w,z): \bbR \to \bbR$ for every $z \in \cZ$, and a distribution $D$ such that, with probability $\Theta(1)$, the output $w_{\gd}$ for GD satisfies
    \begin{align*}
        F(w_\gd) - F(w^*) = \bigOmega{\frac{\eta T}{n}}.
    \end{align*}
\end{lemma}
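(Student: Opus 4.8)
The plan is to build a one-dimensional quadratic instance whose population minimizer sits at the origin but whose \emph{empirical} minimizer is displaced by a sampling fluctuation of order $1/\sqrt{n}$, and to tune the curvature so that GD drifts a constant fraction of the way toward this displaced empirical minimizer within $T$ steps. Concretely, I would take $\cZ = \{-1,+1\}$, let $D$ be the uniform (Rademacher) distribution on $\cZ$, and set, on $\cW = \bbR$,
\[
f(w,z) = \tfrac{c}{2}\,w^2 - z\,w, \qquad c := \tfrac{1}{\eta T}.
\]
This $f$ is convex with second derivative $c$, hence $c$-smooth; since the standing assumption gives $\eta T \ge 1$ we have $c \le 1$, so $f$ is $1$-smooth. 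Because $\bbE_{z\sim D}[z]=0$, the population risk is $F(w) = \tfrac{c}{2}w^2$ with $w^\ast = 0$, while each per-sample loss is minimized at $z/c \neq 0$; the instance is therefore non-realizable, and crucially $|\nabla f(0,z)| = 1$ does \emph{not} vanish at the population optimum. I would initialize at $w_1 = 0$, so $|w_1 - w^\ast| = 0 = \bigO{1}$.

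Next I would analyze the deterministic GD recursion on the empirical risk. Writing $\bar z := \tfrac1n\sum_{i=1}^n z_i$, the empirical risk is $F_S(w) = \tfrac{c}{2}w^2 - \bar z\, w$ with minimizer $\hat w = \bar z/c$, and the GD update is $w_{t+1} - \hat w = (1-\eta c)(w_t - \hat w)$. Since $\eta c = 1/T$, this contraction gives $w_t = \hat w\,\big(1 - (1-1/T)^{t-1}\big)$, so both the last iterate $w_T$ and the average $\bar w_T = \tfrac1T\sum_{t\le T} w_t$ are a constant fraction of $\hat w$ (in fact $\bar w_T = (e^{-1}+o(1))\,\hat w$). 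Consequently the excess risk satisfies
\[
F(w_\gd) - F(w^\ast) = \tfrac{c}{2}\,\bar w_T^2 = \Theta\big(c\,\hat w^2\big) = \Theta\big(\bar z^2/c\big) = \Theta\big(\eta T\,\bar z^2\big).
\]

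Finally I would invoke an anti-concentration bound to guarantee $\bar z^2 = \bigOmega{1/n}$ with constant probability over the draw of $S$. Since $\bbE[\bar z^2] = 1/n$ and $\bbE[\bar z^4] \le 3/n^2$ for Rademacher variables, the Paley--Zygmund inequality yields $\Pr\big[\bar z^2 \ge \tfrac{1}{2n}\big] \ge \tfrac14\cdot\tfrac{(\bbE \bar z^2)^2}{\bbE \bar z^4} = \bigOmega{1}$. On this event the displayed identity becomes $F(w_\gd) - F(w^\ast) = \bigOmega{\eta T/n}$, which is the claim.

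I expect the main obstacle to be reconciling three requirements that must hold at once: $f$ must be exactly $1$-smooth (forcing $c \le 1$), GD must fail to reach $w^\ast$ in $T$ steps yet still travel a constant fraction toward $\hat w$ (forcing $\eta c T = \Theta(1)$), and the sampling fluctuation must be provably $\bigOmega{1/\sqrt{n}}$ rather than merely so in expectation. The choice $c = 1/(\eta T)$ is what simultaneously satisfies the first two constraints, while the third is exactly where the anti-concentration argument in the spirit of \citet{sekhari2021sgd} is essential: a variance bound alone certifies overfitting only on average, whereas the lemma demands a constant-probability event. A secondary technicality is bounding the average iterate $\bar w_T$ (rather than the last iterate) and checking the contraction factor $(1-1/T)^{t-1}$ across the full range $T>1$.
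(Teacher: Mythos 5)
Your proposal is correct and follows essentially the same route as the paper: the identical one-dimensional instance $f(w,z)=\tfrac{1}{2\eta T}w^2 \pm zw$, the contraction analysis of GD toward the empirical minimizer, and an anti-concentration argument showing $|\bar z|=\bigOmega{1/\sqrt n}$ with constant probability. The only cosmetic differences are that you solve the linear recursion exactly relative to $\hat w$ (the paper unrolls it and bounds the geometric sum directly) and you use Paley--Zygmund where the paper cites Lemma~7 of \citet{sekhari2021sgd}; both yield the same conclusion.
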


\begin{proof}
We define loss function $f: \bbR \times \cZ \to \bbR$ as
\begin{align*}
    f(w, z) = \frac{w^2}{2\eta T}  + zw
\end{align*}
where $z \sim \text{Unif}(\{ \pm 1\})$. It is obvious that $f(w,z)$ is $1$-smooth and convex since $\eta T \geq 1$. The population risk is computed as
\begin{align*}
    F(w) = \bbE_{z\sim\text{Unif}(\{\pm 1\})}[f(w, z)] = \frac{w^2}{2\eta T}.
\end{align*}
The minimizer is then $w^* = 0$. GD formulates the following recurrence on dataset $S$ with initialization $w_1=0$:
\begin{align*}
    w_{t+1} = w_t - \frac{\eta}{n} \sum_{i=1}^n\left( \frac{w_t}{\eta T} + z_i \right) = \left(1-\frac{1}{T} \right) w_t - \frac{\eta}{n}\sum_{i=1}^nz_i,
\end{align*}
where each $z_i \sim \text{Unif}(\{\pm 1\})$ for $i \in [n]$. We want to use an anti-concentration result to lower bound the recurrence: from Lemma 7 in \citet{sekhari2021sgd}, with probability $\bigOmega{1}$, it holds that
\begin{align*}
    \sum_{i=1}^n z_i \leq - \frac{\sqrt{n}}{2}.
\end{align*}
We get lower bound
\begin{align*}
    w_{t+1} \geq  \left(1-\frac{1}{T} \right) w_t + \frac{\eta}{2\sqrt{n}}.
\end{align*}
Then we have for any $t \in [T]$
\begin{align*}
    w_t & \geq \frac{\eta}{2\sqrt{n}} \left( 1 + \left( 1 - \frac{1}{T}\right) + \cdots + \left( 1 - \frac{1}{T}\right)^{t-1} \right) \\
    & \geq \frac{\eta t}{8\sqrt{n}}
\end{align*}
where the second inequality is due to the fact 
\begin{align*}
    1 > 1 - \frac{1}{T} > \cdots  \left( 1 - \frac{1}{T}\right)^t > \cdots > \left( 1 - \frac{1}{T}\right)^{T} \geq \frac{1}{4}
\end{align*}
for any $t \in [T]$ and $T \geq 2$. Then the average is lower bounded as
\begin{align*}
    \bar{w}_T = \frac{1}{T} \sum_{t=1}^T w_t \geq \sum_{t=1}^T \frac{\eta t}{8\sqrt{n}} = \frac{\eta (T-1)}{16\sqrt{n}}. 
\end{align*}
As a result, we have
\begin{align*}
    F(w_{\gd}) - F(w^*) = \frac{w_{\gd}^2}{2\eta T}  = \bigOmega{\frac{\eta T}{n}},
\end{align*}
which is the desired result.
\end{proof}

\subsection{Proof of Theorem~\ref{thm: lb-sco-sgd}}
Similar to the proof of Theorem~\ref{thm: lb-sco-gd}, we prove the excess risk lower bound for SGD by combining Lemma~\ref{lemma: lb-suboptimality} and the following lemma.
\begin{lemma} \label{lemma: lb-sgd-overfit}
    For any $\eta > 0$, $T > 1$,  there exists a convex, $1$-smooth $f(w,z): \bbR \to \bbR$ for every $z \in \cZ$, and a distribution $D$ such that, with probability $\Theta(1)$, the output $w_{\sgd}$ for SGD satisfies
    \begin{align*}
        \bbE[F(w_{\sgd})] - F(w^*) = \bigOmega{\frac{\eta T}{n}}.
    \end{align*}
\end{lemma}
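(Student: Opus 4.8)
The plan is to reuse the construction from the proof of Lemma~\ref{lemma: lb-gd-overfit}, namely the one-dimensional loss $f(w,z) = \frac{w^2}{2\eta T} + zw$ with $z \sim \text{Unif}(\{\pm 1\})$, population risk $F(w) = \frac{w^2}{2\eta T}$ and minimizer $w^* = 0$, and to adapt the argument to account for the extra randomness coming from SGD's index sampling. Writing the SGD recurrence for this loss gives the affine update
\begin{equation*}
    w_{t+1} = \left(1 - \frac{1}{T}\right)w_t - \eta z_{i_t}, \qquad w_1 = 0,
\end{equation*}
where $z_{i_t}$ is the label of the sample drawn at step $t$. The key observation is that, conditioned on the dataset $S = \{z_1,\dots,z_n\}$, this update is affine in the single random draw $z_{i_t}$, whose conditional expectation over the uniform index sampling equals the empirical mean $\bar{z} = \frac{1}{n}\sum_{i=1}^n z_i$. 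Hence, setting $m_t := \bbE[w_t \mid S]$ and taking expectation over the SGD randomness linearizes the recurrence into the deterministic relation $m_{t+1} = (1 - 1/T)m_t - \eta\bar{z}$, which is exactly the GD recurrence with the full-batch gradient replaced by $\bar{z}$.

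First I would invoke the same anti-concentration bound (Lemma~7 in \citet{sekhari2021sgd}) to conclude that, with probability $\bigOmega{1}$ over the draw of $S$, we have $\sum_{i=1}^n z_i \leq -\sqrt{n}/2$, i.e. $-\eta\bar{z} \geq \eta/(2\sqrt{n})$. On this event the deterministic recurrence for $m_t$ is identical, up to constants, to the one bounded in Lemma~\ref{lemma: lb-gd-overfit}, so the same geometric-sum estimate yields $m_t \geq \frac{\eta t}{8\sqrt{n}}$ for every $t \in [T]$, and therefore $\bbE[\bar{w}_T \mid S] = \frac{1}{T}\sum_{t=1}^T m_t = \bigOmega{\eta T/\sqrt{n}}$ by linearity of expectation.

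The one genuinely new step, and the place where SGD differs from GD, is converting this lower bound on the mean iterate into a lower bound on the expected risk. Since $w_{\sgd} = \bar{w}_T$ is random given $S$, I cannot simply substitute $\bar{w}_T$ into $F$; instead I would use the convexity of $F(w) = w^2/(2\eta T)$ together with Jensen's inequality in the favorable direction,
\begin{equation*}
    \bbE[F(w_{\sgd}) \mid S] = \bbE\left[\frac{\bar{w}_T^2}{2\eta T} \,\Big|\, S\right] \geq \frac{\left(\bbE[\bar{w}_T \mid S]\right)^2}{2\eta T} = \bigOmega{\frac{(\eta T/\sqrt{n})^2}{\eta T}} = \bigOmega{\frac{\eta T}{n}},
\end{equation*}
which holds on the same $\bigOmega{1}$-probability event and gives the claim. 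I expect the main obstacle to be conceptual rather than computational: one must take the expectation over the index sampling \emph{before} applying $F$, and verify that Jensen points in the direction needed for a lower bound (it does, since $F$ is convex). Everything else is a verbatim reuse of the GD geometric-sum computation, with the empirical mean $\bar{z}$ substituting for the full-batch gradient.
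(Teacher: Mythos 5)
Your proposal is correct and follows essentially the same route as the paper's own proof: the same one-dimensional construction $f(w,z)=\frac{w^2}{2\eta T}+zw$, the same anti-concentration event on $\sum_i z_i$, the same geometric-sum bound on the expected iterates, and the same final application of Jensen's inequality to the convex population risk. Your treatment is if anything slightly more careful than the paper's in making explicit that the expectation over the index sampling is taken conditionally on $S$ and that the affine form of the update is what permits the linearization.
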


\begin{proof}
We use the same construction in Lemma~\ref{lemma: lb-gd-overfit}. Consider dataset $S=\{z_1,\dots,z_n\}$ where $z_i \sim \text{Bern}(\{\pm 1\})$. Given $x_t$, SGD formulates the following recurrence on dataset $S$ with initialization $w_1=0$:
\begin{align*}
    \bbE[w_{t+1}] = w_t - \frac{\eta}{n} \sum_{i=1}^n\left( \frac{w_t}{\eta T} + z_i \right) = \left(1-\frac{1}{T} \right) w_t - \frac{\eta}{n}\sum_{i=1}^nz_i,
\end{align*}
where $z_i \sim \text{Unif}(\{\pm 1\})$. From Lemma 7 in \citet{sekhari2021sgd}, with probability $\bigOmega{1}$, it holds that
\begin{align*}
    \sum_{i=1}^n z_i \leq - \frac{\sqrt{n}}{2}.
\end{align*}
Then we get lower bound
\begin{align*}
    \bbE[w_{t+1}] \geq  \left(1-\frac{1}{T} \right) w_t + \frac{\eta}{2\sqrt{n}}.
\end{align*}
Similar to the proof of Lemma~\ref{lemma: lb-gd-overfit}, we have for any $t \in [T]$
\begin{align*}
    \bbE[w_t] & \geq \frac{\eta}{2\sqrt{n}} \left( 1 + \left( 1 - \frac{1}{T}\right) + \cdots + \left( 1 - \frac{1}{T}\right)^{t-1} \right) \\
    & \geq  \frac{\eta t}{8\sqrt{n}}.
\end{align*}
Then the average is lower bounded as
\begin{align*}
    \bbE[\bar{w}_T] = \frac{1}{T} \sum_{t=1}^T \bbE[w_t] \geq \sum_{t=1}^T \frac{\eta t}{8\sqrt{n}} = \frac{\eta (T-1)}{16\sqrt{n}}. 
\end{align*}
As a result, we have
\begin{align*}
    \bbE[F(w_{\sgd})] - F(w^*) \geq F(\bbE[w_{\sgd}]) - F(w^*) = \frac{(\bbE[w_{\sgd}])^2}{2\eta T}  = \bigOmega{\frac{\eta T}{n}},
\end{align*}
by Jensen's inequality.
\end{proof}

\section{Missing Proofs from Section~\ref{sec: t_equal_n} and Section~\ref{sec: t_larger_than_n} } \label{appendix: t_equal_n}

\subsection{Proof of Theorem~\ref{thm: lb-1}} \label{appendix: lb-gd-tn}
The proof of GD is immediate from combining lower bound $\bigOmega{1/{\eta T}}$ in Lemma~\ref{lemma: lb-suboptimality}, lower bound $\bigOmega{1/n}$ in Lemma~\ref{lemma: lb-sample}, and most importantly, lower bound $\bigOmega{\eta T/n^2}$ in Lemma~\ref{lemma: lb-gd-tn} to be stated below. In precise, Lemma~\ref{lemma: lb-gd-tn} is the core part of our result and gives a lower bound of $\bigOmega{\eta T/n}$ when $\eta T = \bigO{n}$, and a lower bound of $\bigOmega{1/\eta T}$ when $\eta T = \bigOmega{n}$ for GD. The latter part is used in the proof of Theorem~\ref{thm: lb-2}. We postpone its proof to Appendix~\ref{appendix: lb-gd}. The proof of the rest two lemmas can be found in Appendix~\ref{appendix: lb-sample}, \ref{appendix: lb-suboptimality}.
\begin{lemma} \label{lemma: lb-gd-tn}
For every $\eta > 0$, $T > 1$, if $\eta T =  \bigO{n}$, then there exists a convex, $1$-smooth and realizable $f(W, Z): \bbR^{(n+1) \times m} \times \cZ^m \to \bbR$ for every $z \in \cZ$, and a distribution $D$ such that, with initialization $\| W_1 - W^* \| = \bigO{1}$, the output $W_{\gd}$ for GD satisfies
\begin{align*}
 \bbE[F(W_{\gd})] - F(W^*) = \bigOmega{\frac{\eta T}{n^2}}.  
\end{align*}
In specific, $m$ is an integer with $m = \Theta(e^n/\sqrt{n})$. Similarly, if $\eta T =  \bigOmega{n}$, then 
it satisfies
\begin{align*}
 \bbE[F(W_{\gd})] - F(W^*) = \bigOmega{\frac{1}{\eta T}}.  
\end{align*}
\end{lemma}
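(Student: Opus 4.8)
The plan is to build a single hard instance whose population landscape forces gradient descent to grow every ``private'' coordinate away from the optimum, and then to amplify a rare but favorable sampling event to constant probability using many independent copies. First I would set up the base geometry: on $\bbR^{n+1}$ with $w=(x,y)$, $x\in\bbR$, $y\in\bbR^n$, take the per-example loss $g(w,z=i)=\tfrac12\big(\sqrt\alpha\,x-y(i)\big)^2$ with $z\sim\mathrm{Unif}([n])$ and $\alpha=1/(\eta T)$. Each $g(\cdot,i)$ is convex, realizable (since $w^*=0$ gives $g(0,i)=0=\min_w g(w,i)$ for every $i$), and $1$-smooth after a harmless rescaling, because its Hessian is rank one with top eigenvalue $\alpha+1\le 2$; the population risk is $G(w)=\tfrac1{2n}\|y-\sqrt\alpha\,x\vone\|^2$ with $G(w^*)=0$. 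The driving intuition is the two-dimensional calculation from the overview: because $\alpha$ is tiny the shared coordinate decays only at rate $1-\eta\alpha=1-1/T$, so $x_t\ge(1-1/T)^{t-1}\ge 1/e=\Theta(1)$ for all $t\le T$, while each private coordinate is pushed away from zero at rate $\Theta(\eta\sqrt\alpha/n)$ and pulled back only at rate $\eta/n$.

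Second, I would address the fact that the clean dynamics require every training index to hit a distinct coordinate $y(i)$ --- a bijection event of probability $n!/n^n=\Theta(\sqrt n\,e^{-n})$. To turn this into a constant-probability event I would take $m=\Theta(e^n/\sqrt n)$ independent copies, working on $\bbR^{(n+1)\times m}$ with $Z\in\cZ^m$ and a block-separable loss so that the blocks evolve independently; by independence, at least one copy realizes the bijection with probability $\Theta(1)$, and I restrict attention to that copy. Care is needed here to keep the initialization norm $\|W_1-W^*\|=\bigO{1}$ and the global smoothness constant $\bigO{1}$ simultaneously across all $m$ blocks, which dictates the per-block scaling of the initialization and of the loss.

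Third, conditioning on the good copy (where the empirical gradient over the training set equals the population gradient of $G$), I would solve the coupled linear recursion
\[
x_{t+1}=(1-\eta\alpha)x_t+\tfrac{\eta\sqrt\alpha}{n}{\textstyle\sum_i y_t(i)},\qquad
y_{t+1}(i)=(1-\tfrac{\eta}{n})y_t(i)+\tfrac{\eta\sqrt\alpha}{n}x_t .
\]
Using $x_t=\Theta(1)$ together with $\eta T=\bigO{n}$ (so that $(1-\eta/n)^t$ stays bounded below by a constant for all $t\le T$), the accumulation gives $y_t(i)=\bigOmega{\sqrt{\eta t}/n}$ uniformly in $i$, hence $\|\bar y_T\|^2=\bigOmega{\eta T/n}$ for the averaged iterate. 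Because $\eta T\le n$ forces the residual $\sqrt\alpha x-y(i)$ to be at least of the order of $y(i)$ itself, I can lower bound $G(\bar w_T)\ge\tfrac1{2n}\|\bar y_T\|^2=\bigOmega{\eta T/n^2}$; multiplying by the $\Theta(1)$ probability of the good event yields $\bbE[F(W_{\gd})]-F(W^*)=\bigOmega{\eta T/n^2}$.

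For the regime $\eta T=\bigOmega{n}$ I would reuse the identical construction and dynamics; the only change is in evaluating the geometric factor $1-(1-\eta/n)^T$, which now saturates at $\Theta(1)$ instead of scaling like $\eta T/n$, so that the private coordinates reach magnitude $\Theta(\sqrt\alpha)$ and the residual contributes $\bigOmega{\alpha}=\bigOmega{1/(\eta T)}$. The step I expect to be the genuine obstacle is the third one: controlling the two-time-scale coupled recursion so that $x_t$ provably stays $\Theta(1)$ while every one of the $n$ private coordinates grows at the stated rate, uniformly over all $t\le T$ and all $i\in[n]$, and doing so while the probabilistic amplification preserves both the $\bigO{1}$ initialization and the $\bigO{1}$ smoothness constant across the $(n+1)\times m$-dimensional instance.
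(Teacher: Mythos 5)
Your proposal follows essentially the same route as the paper's proof: the same per-example loss $g(w,z=i)=\tfrac12(\sqrt{\alpha}x-y(i))^2$ with $\alpha=\Theta(1/(\eta T))$, the same amplification to $m=\Theta(e^n/\sqrt n)$ independent block-separable copies so that the probability-$n!/n^n$ bijection event occurs in at least one copy with constant probability, the same two-time-scale analysis of the coupled recursion giving $x_t=\Theta(1)$ and $y_t(i)=\Omega(\sqrt{\eta t}/n)$, and the same case split on $1-(1-\eta/n)^T$ to get $\Omega(\eta T/n^2)$ versus $\Omega(1/(\eta T))$. The points you flag as delicate (keeping the initialization norm and smoothness $O(1)$ across blocks, and choosing the constant in $\alpha$ so the residual $\sqrt{\alpha}x-y(i)$ does not cancel) are exactly the ones the paper resolves via its supporting lemmas and the tunable constant $C$.
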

Similar to the proof of GD, the result on SGD is also obtained by combining lower bound constructions in the following lemma and Lemma~\ref{lemma: lb-sample}, \ref{lemma: lb-suboptimality} in Appendix~\ref{appendix: lb-sample}, \ref{appendix: lb-suboptimality}. Lemma~\ref{lemma: lb-sgd-tn} establishes a lower bound of $\bigOmega{\eta T/n}$ when $\eta T = \bigO{n}$, and a lower bound of $\bigOmega{1/\eta T}$ when $\eta T = \bigOmega{n}$ for SGD. Its proof can be found in Appendix~\ref{appendix: lb-sgd}.
\begin{lemma} \label{lemma: lb-sgd-tn}
For every $\eta > 0$, $T > 1$, if $\eta T =  \bigO{n}$, then there exists a convex, $1$-smooth and realizable $f(W, Z): \bbR^{(n+1) \times m} \times \cZ^m \to \bbR$ for every $Z \in \cZ^m$, and a distribution $D$ such that, with initialization $\| W_1 - W^* \| = \bigO{1}$, the output $W{\sgd}$ for SGD satisfies
\begin{align*}
 \bbE[F(W_{\sgd})] - F(W^*) = \bigOmega{\frac{\eta T}{n^2}}.  
\end{align*}
In specific, $m$ is an integer with $m = \Theta(e^n/\sqrt{n})$. Similarly, if $\eta T =  \bigOmega{n}$, then 
it satisfies
\begin{align*}
 \bbE[F(W_{\sgd})] - F(W^*) = \bigOmega{\frac{1}{\eta T}}.  
\end{align*}
\end{lemma}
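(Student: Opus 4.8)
The plan is to reduce the SGD statement to the GD statement already established in Lemma~\ref{lemma: lb-gd-tn}, exploiting the fact that the hard instance is a block-separable \emph{quadratic}, so that stochastic sampling only adds zero-mean fluctuations around the GD trajectory. Concretely, I would reuse verbatim the $m$-copy construction $f(W,Z)$ from Lemma~\ref{lemma: lb-gd-tn}, with the same choice $\alpha = 1/(\eta T)$, the same bounded initialization $\|W_1 - W^*\| = \bigO{1}$, and the same distribution $D$ (so that the amplification over $m = \Theta(e^n/\sqrt n)$ independent copies, and the resulting $\Theta(1)$ ``good event'', carry over unchanged). The only modification is that the iterates are now generated by the SGD recursion~\eqref{eq: sgd} rather than the GD recursion~\eqref{eq: gd}.

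The key observation is that each $g(\cdot,z)$ is quadratic, hence $\nabla f(\cdot,Z)$ is affine in $W$ and, for every fixed dataset $S$, $\bbE_{i_t \sim \text{Unif}([n])}[\nabla f(W_t, Z_{i_t}) \mid W_t] = \nabla F_S(W_t)$. Taking expectations in the SGD update and pushing the expectation through the affine gradient gives $\bbE[W_{t+1}^{\sgd}] = \bbE[W_t^{\sgd}] - \eta\,\nabla F_S(\bbE[W_t^{\sgd}])$, so by induction $\bbE[W_t^{\sgd}] = W_t^{\gd}$ for all $t$, and by linearity of the averaging step $\bbE[\bar W_T^{\sgd}] = \bar W_T^{\gd}$. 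Since $F$ is convex, Jensen's inequality (exactly as in the proof of Lemma~\ref{lemma: lb-sgd-overfit}) yields, pointwise in $S$,
\[
\bbE[F(\bar W_T^{\sgd})] \ge F\big(\bbE[\bar W_T^{\sgd}]\big) = F(\bar W_T^{\gd}).
\]
Taking expectation over $S$ and invoking Lemma~\ref{lemma: lb-gd-tn} then delivers both regimes at once: $\bbE[F(W_{\sgd})] - F(W^*) = \bigOmega{\eta T/n^2}$ when $\eta T = \bigO{n}$, and $\bigOmega{1/(\eta T)}$ when $\eta T = \bigOmega{n}$, which is the claim. The good event is a property of $S$ alone and is independent of the SGD sampling $i_1,\dots,i_T$, so conditioning on it and then taking the SGD expectation is legitimate.

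The heavy lifting is therefore already contained in Lemma~\ref{lemma: lb-gd-tn}, and the genuinely new content is the expectation-reduction. The main point to verify carefully is precisely that one may commute the SGD expectation with $\nabla f$ and then apply Jensen — this is where convexity and the affine-gradient (quadratic) structure of the construction are essential, and it is also why the per-step discrepancy between the SGD update (which rescales only the single sampled $y$-coordinate by $1-\eta$) and the GD update (which rescales every $y$-coordinate by $1-\eta/n$) is harmless: the two recursions agree \emph{only in expectation}, and for a lower bound Jensen shows the extra variance can only help. I would also check, as a routine but necessary point, that the unboundedness of $\cW = \bbR^{(n+1)\times m}$ causes no difficulty, i.e. that $\bbE[\|W_t^{\sgd}\|^2]$ stays finite for $\eta = \bigO{1}$ on a $1$-smooth quadratic, so that every expectation above is well defined.
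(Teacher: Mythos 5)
Your proposal is correct and follows essentially the same route as the paper's own proof: the paper likewise reuses the construction of Lemma~\ref{lemma: lb-gd-tn} verbatim, observes that because the gradients are affine the expected SGD iterates satisfy the GD recursion (so the same bounds on $\bbE[x_t]$ and $\bbE[y_t]$ go through), and concludes with Jensen's inequality $\bbE[G(w^{(j)}_{\sgd})] \ge G(\bbE[w^{(j)}_{\sgd}])$. The only cosmetic difference is that the paper re-derives the recursions and the two case analyses explicitly rather than black-boxing the GD lemma as you do.
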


\subsection{Proof of Theorem~\ref{thm: lb-2}} \label{appendix: lb-gd-t}
Similar to the proof of Theorem~\ref{thm: lb-1}, the proof of GD is obtained from combining the lower bounds in Lemma~\ref{lemma: lb-sample}, Lemma~\ref{lemma: lb-suboptimality}, and Lemma~\ref{lemma: lb-gd-tn}. In particular, Lemma~\ref{lemma: lb-gd-tn} gives a lower bound of $\bigOmega{\eta T/n}$ when $\eta T = \bigOmega{n}$. 

Concurrently, the proof of SGD is obtained from combining the lower bounds in Lemma~\ref{lemma: lb-sample}, Lemma~\ref{lemma: lb-suboptimality}, and Lemma~\ref{lemma: lb-sgd-tn}. In particular, Lemma~\ref{lemma: lb-sgd-tn} gives a lower bound of $\bigOmega{1/{\eta T}}$ when $\eta T = \bigOmega{n}$. 

\subsection{Proof of Lemma~\ref{lemma: lb-gd-tn}} 

This subsection contains the proof of Lemma~\ref{lemma: lb-gd-tn}, along with another two supportive lemmas. We first present the proof of the major lemma. 
\label{appendix: lb-gd}

\begin{proof} We construct the following instance to obtain the lower bound, where $f: \bbR^{(n+1) \times m} \times \cZ^m \to \bbR$ is
    \begin{align} \label{eq: lb-instance1}
        & f(W,Z) = \sum_{j=1}^m g(w^{(i)}, z^{(i)}) 
    \end{align}
    with a positive integer $m$, and $g$ defined as
    \begin{align} \label{eq: lb-instance2}
        & g(w, z = i) = \frac{\alpha}{2} x^2 + \frac{1}{2} \big(y(i)\big)^2  - \sqrt{\alpha} x \cdot y(i) = \frac{1}{2} \Big(\sqrt{\alpha}x - y(i)\Big)^2,
    \end{align}
where $W = ( w^{(1)}, \cdots, w^{(m)})$ is a large vector formed by concatenating by $m$ vectors $\{w^{(j)} \ |\ w^{(j)} \in \bbR^{n+1}\}_{j=1}^m$, and $Z = ( z^{(1)}, \cdots, z^{(m)})$ denotes a large sample concatenated by $m$ copies of independent samples $\{ z^{(j)}\ |\ \ z^{(j)} \in  [n]\}$. We will omit the upscript of $j$ when it does not lead to confusion. Each $w$ is split into $w = (x, y)$ with $x \in \bbR$ and $y \in \bbR^n$ in the function $g$, and we define $y(i)$ as the $i$-th coordinate of $y$. We assume $z \sim \text{Unif}([n])$ i.i.d., and set parameters $m = \Theta(e^n/\sqrt{n})$, $\alpha = C/(\eta T)$, where $C \leq 1$ is a constant. 
Intuitively, $f$ can be regarded as the summation over $m$ copies of $g(w^{(j)},z^{(j)})$. 
Such a construction $f$ satisfies the conditions in the statement of this lemma (see Lemma~\ref{lemma: supp-1} below).

Lemma~\ref{lemma: supp-2} (also see below) shows that: with constant probability, there exists at least one copy of $\{z^{(j)}_i\}_{i\in[n]}$ (for clarification, $z^{(j)}_i$ is the $j$-th component in the $i$-th sample $Z_i$ within the dataset $S=\{Z_1,\dots,Z_n\}$) satisfying
\begin{align*}
    z^{(j)}_i = i, \quad \text{for all} \quad i \in [n],
\end{align*}
without the loss of generality, we consider the identity permutation $\vpi(i) = i$. We use the following initialization:
\begin{align*}
    x_1^{(k)} = \begin{cases} 1, \qquad k = j,\\ 0, \qquad k \ne j;\end{cases} \quad\text{and}\qquad y_1^{(k)} = 0, \qquad \forall k \in [m].
\end{align*}
We have then $\| W_1 - W^*\| = \bigO{1}$. This allows us to focus on the $j$-th component only and hence we suppress the upscripts. In this context, the stochastic loss function $g$ on this copy is written as
    \begin{equation}
        g(w, z_i) = \frac{\alpha}{2} (x )^2 + \frac{1}{2} \|y \|^2 - \frac{x \sqrt{\alpha}}{n} y (i), \qquad \forall i \in [n].
    \end{equation}
From the above construction, GD formulates the following update
\begin{align*}
        w_{t+1} = w_t - \frac{\eta}{n} \sum_{i=1}^n \nabla_w g(w_t, z_i)
    \end{align*}
    with initialization $x_1 = 1$, $y_1 = 0$.
The stochastic gradient is computed as 
    \begin{align}
        \nabla_x g(w, z_i) = \alpha x - \sqrt{\alpha}y(i), \qquad \nabla_{y} g(w, z_i) = (y(i) - \sqrt{\alpha}x) \cdot \ve_i.
    \end{align}

    Since all coordinates in $y$ are equivalent in the construction, we suppress the index of $i$ and write $y_t = y_t(i)$ for any $i \in [n]$, $t \in [T]$. Then it formulates
    \begin{align*}
        & x_{t+1} = x_t - \eta \alpha x_t + \frac{\eta \sqrt{\alpha}}{n}\sum_{i=1}^ny_t(i) = (1-\alpha \eta)x_t + \eta \sqrt{\alpha}y_t, \\
        & y_{t+1} = y_t - \frac{\eta}{n}y_t + \frac{\eta\sqrt{\alpha}}{n}x_t = \left(1 - \frac{\eta}{n} \right)y_t + \frac{\eta\sqrt{\alpha}}{n}x_t.
    \end{align*}

    We next provide both upper and lower bounds for $x_t$ and $y_t$. 
    We give an upper bound for $x_t$ and $y_t$ by the following induction. If condition
    \begin{equation} \label{eq: induction-GD-2}
        x_{t} \leq 1, \qquad y_t \leq \sqrt{\alpha}
    \end{equation} 
    holds for $t$, then the above condition also holds for $t+1$:
    \begin{align*}
        x_{t+1} & \leq (1 - \alpha \eta) + \eta\sqrt{\alpha} \cdot \sqrt{\alpha} = 1 - \frac{\eta}{\eta T} + \frac{\eta }{\eta T}  \leq 1, \\
        y_{t+1} & \leq \left(1 - \frac{\eta}{n} \right) \sqrt{\alpha} + \eta \frac{\sqrt{\alpha}}{n} = \sqrt{\alpha}.
    \end{align*}
    Then by induction we conclude that \eqref{eq: induction-GD-2} is true.
    For any $t \in [T]$ with $T \geq 2$, the lower bound for $x_t$ is much simpler to compute under our choice of parameter $\alpha = C/(\eta T)$:
    \begin{align*}
        x_{t+1} & \geq (1 - \alpha \eta)  x_{t} \geq (1 - \alpha \eta )^tx_1 = 4^{-Ct/T} \geq 4^{-C}.
    \end{align*}
    Hence $\bar{x}_T = \frac{1}{T} \sum_{t=1}^T x_t = \Theta(1)$. This then allows us to lower bound $y$ at iteration $t \in [T]$:
    \begin{align*}
        y_t & \geq \left(1-\frac{\eta}{n}\right) y_{t-1} + \frac{\eta\sqrt{\alpha}}{4^Cn} \\
        & \geq \frac{\eta\sqrt{\alpha}}{4^Cn} \cdot \left( 1 + (1-\eta/n) + \cdots (1-\eta/n)^{t-1} \right) \\
        & \geq \frac{\eta\sqrt{\alpha}}{4^Cn} \cdot \frac{1 - (1-\eta/n)^t}{1 - (1 - \eta/n)} .
     \end{align*}
    Now, we discuss two cases: $\eta T = \bigO{n}$ and $\eta T = \bigOmega{n}$.
    \paragraph{Case $\eta T = \bigO{n}$.} We decompose $t = n \cdot \tfrac{t}{n}$ and obtain
    \begin{align*}
        y_t & \geq \frac{\eta\sqrt{\alpha}}{4^Cn} \cdot \frac{1 - (1-\eta/n)^t}{1 - (1 - \eta/n)} = \frac{\eta\sqrt{\alpha}}{4^Cn} \cdot \frac{1 - (1-\eta/n)^{\tfrac{t}{n} \cdot n}}{1 - (1 - \eta/n)} \\
        & \overset{\text{(A)}}{\geq} \frac{\eta \sqrt{\alpha}}{4^C}\left( \frac{t}{n} - \frac{\eta t^2}{2n^2}\right)  \overset{\text{(B)}}{=} \frac{\eta t \sqrt{\alpha}}{2\cdot 4^Cn} = \sqrt{\frac{\eta}{CT}} \cdot \frac{t}{2\cdot 4^Cn}
    \end{align*}
    where $\text{(A)}$ is due to Taylor expansion, $\text{(B)}$ is due to the condition $\eta t \leq \eta T = \bigO{n}$ and $\alpha = C/(\eta T)$. We then calculate the average output
    \begin{align*}
        \bar{y}_T = \frac{1}{T} \sum_{t=1}^T y_t = \frac{1}{T} \sum_{t=1}^T \sqrt{\frac{\eta}{CT}} \cdot \frac{t}{2\cdot 4^Cn} \geq \frac{1}{4\cdot4^Cn} \cdot \sqrt{\frac{\eta T}{C}}.
    \end{align*}
    We return to the original $f(w,z)$ by inserting the above analysis on the $j$-th component:
    \begin{align*}
        \bbE[F(W_{\gd})] \geq \bbE[G(w^{(j)}_{\gd})] \geq \frac{1}{n} \sum_{i=1}^n \left(\frac{x^{(j)}_{\gd}}{\sqrt{\eta T}} - y^{(j)}_{\gd}(i)\right)^2 \geq \bigOmega{\max\left\{ \frac{1}{\eta T,} \ \frac{\eta T}{n^2 }\right\}} = \bigOmega{\frac{\eta T}{n^2 }}
    \end{align*}
    where the last inequality is due to the fact $4^{-C} \leq x_{\gd} \leq 1$. We can always choose a proper $C$ such that the difference is non-vanishing.
    \paragraph{Case $\eta T = \bigOmega{n}$.} We can directly lower bound $y_t$ as
    \begin{align*}
        y_t \geq \frac{\eta \sqrt{\alpha}}{4^C n} \cdot \frac{n}{2\eta} = \frac{1}{2\cdot4^C\sqrt{C \eta T}} = \bigOmega{\frac{1}{\sqrt{\eta T}}}
    \end{align*}
    since $(1 - \eta/n)^t \leq 1/2$ when $\eta T = \bigOmega{n}$. We then calculate the average output
    \begin{align*}
        \bar{y}_T = \frac{1}{T} \sum_{t=1}^T y_t = \frac{1}{T} \sum_{t=1}^T \bigOmega{\frac{1}{\sqrt{\eta T}}} \geq \bigOmega{\frac{1}{\sqrt{\eta T}}}.
    \end{align*}
    Similarly, we return to the original $f(w,z)$ by inserting the above analysis on the $j$-th component and obtain a non-vanishing lower bound by choosing proper $C$:
    \begin{align*}
        \bbE[F(W_{\gd})] \geq \bbE[G(w^{(j)}_{\gd})] \geq \frac{1}{n} \sum_{i=1}^n \left(\frac{x^{(j)}_{\gd}}{\sqrt{\eta T}} - y^{(j)}_{\gd}(i)\right)^2 \geq \bigOmega{\frac{1}{\eta T}}.
    \end{align*}
    This completes our proof.
\end{proof}
We proceed to prove the supporting lemmas.
\begin{lemma}
    \label{lemma: supp-1}
    Suppose $\alpha = \Theta(1/\eta T)$, then $f$ is $1$-smooth, convex and realizable over $D$.
\end{lemma}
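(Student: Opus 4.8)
The plan is to verify the three required properties of $f(W,Z)=\sum_{j=1}^m g(w^{(j)},z^{(j)})$ by reducing everything to the single building block $g(w,z=i)=\tfrac{1}{2}\bigl(\sqrt{\alpha}x-y(i)\bigr)^2$, since $f$ is a direct sum over the $m$ independent copies and all three properties (smoothness, convexity, realizability) are preserved under separable summation. First I would observe that $g(w,i)$ is a squared linear form $\tfrac12 \ell_i(w)^2$ with $\ell_i(w)=\sqrt{\alpha}x-y(i)$, which makes convexity immediate: the composition of a convex nondecreasing function (the square, restricted to the relevant sign via the full quadratic) with an affine map is convex, so each $g(\cdot,i)$ is convex, and summing over $j$ preserves convexity.

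For smoothness I would compute the Hessian of $g(\cdot,i)$ explicitly. It is the rank-one-plus-diagonal matrix determined by the coefficients $\alpha$ on the $x^2$ term, $1$ on the $y(i)^2$ term, and $-\sqrt{\alpha}$ on the cross term; equivalently $\nabla^2 g = \nabla\ell_i \,\nabla\ell_i^\top$ where $\nabla\ell_i=(\sqrt{\alpha},-\ve_i)$. Its operator norm is therefore $\|\nabla\ell_i\|^2=\alpha+1$. Since $\alpha=\Theta(1/\eta T)$ and the standing assumption $\eta T\ge 1$ forces $\alpha\le C\le 1$, this gives $\alpha+1\le 2$; to land exactly on $1$-smoothness I would either absorb the constant into the $\Theta$ (the statement only claims $1$-smoothness up to the choice of the constant $C$) or rescale so that $\alpha+1\le 1$ by taking $C$ small. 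Because distinct copies $g(w^{(j)},\cdot)$ act on disjoint coordinate blocks of $W$, the Hessian of $f$ is block diagonal and its operator norm equals the maximum over blocks, so $f$ inherits the same smoothness constant as a single $g$.

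For realizability I would exhibit a common minimizer. Each term $g(w,i)=\tfrac12(\sqrt{\alpha}x-y(i))^2$ is globally minimized (with value $0$) at any $w$ with $y(i)=\sqrt{\alpha}x$; in particular $w^*=\vzero$ achieves $g(w^*,i)=0$ for every $i$ simultaneously, hence minimizes $g(\cdot,i)$ pointwise for all $z=i$. Lifting to $f$, the point $W^*=\vzero$ satisfies $g(w^{*(j)},z^{(j)})=0$ for every $j$ and every realization of $Z$, so $f(W^*,Z)=\min_W f(W,Z)=0$ for all $Z$, and $W^*$ also minimizes the population risk $F$; this is exactly Definition~\ref{def: realizable}. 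The only point deserving care is checking that $W^*=\vzero$ is genuinely the population minimizer rather than merely a per-sample zero, but since each $g\ge 0$ with equality at $\vzero$, both $F$ and every $f(\cdot,Z)$ attain their floor there.

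The main obstacle is purely bookkeeping rather than conceptual: one must pin down the constant in $\alpha=\Theta(1/\eta T)$ so that the smoothness bound $\alpha+1$ meets the claimed constant $1$ (which forces choosing $C$ appropriately and invoking $\eta T\ge 1$), and one must be careful that the separable block structure of $f$ really does let the single-copy computations transfer verbatim to the concatenated vector $W$. No estimate here is delicate; the work is in stating the Hessian, reading off its norm, and confirming the shared zero minimizer.
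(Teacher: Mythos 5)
Your proof is correct and follows essentially the same route as the paper's: verify convexity, smoothness, and the shared zero minimizer for the single block $g$, then lift these to $f$ via the separable block structure. You are in fact slightly more careful than the paper on one point: the Hessian of $g(\cdot,i)$ is the rank-one matrix $\nabla\ell_i\nabla\ell_i^\top$ with operator norm $\alpha+1>1$, so the claimed $1$-smoothness only holds after the constant rescaling you describe, whereas the paper simply asserts it is ``easy to check.''
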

\begin{proof}
    When condition $\eta T \geq 1$ holds, it is easy to check that $g(w,z)$ is $1$-smooth and convex for any $w \in \bbR^{n+1}$ and $z \in [n]$. The population risk $G$ is 
    \begin{align*}
        G(w) = \bbE_{z\sim\text{Unif}([n])}[g(w,z)] = \frac{\alpha}{2} x^2 + \frac{1}{2n} \|y\|^2 - \frac{\sqrt{\alpha}}{n}x \cdot \vone^\top y = \frac{1}{2n} \left\| y - \sqrt{\alpha} x \cdot \vone \right\|^2,
    \end{align*}
    which attains minimum at $(x^*, y^*) = (0, 0)$. So $g(w,z)$ satisfies the realizable condition.  It is easy to conclude $f$ is also $1$-smooth, convex and realizable.
\end{proof}

\begin{lemma}
\label{lemma: supp-2}
    Consider dataset $S = \{ Z_1, \dots, Z_n\}$ defined in Lemma~\ref{lemma: lb-gd-tn}. Suppose $m$ is a positive integer satisfying $ m = \Theta(e^n/\sqrt{n})$, then there exists at least one component $z^{(j)}, j \in [m]$ such that
    \begin{align*}
        z^{(j)}_i = \vpi(i), \quad \text{for all} \quad i \in [n]
    \end{align*}  
    where $\vpi:[n] \to [n]$ is any permutation on $[n]$.
\end{lemma}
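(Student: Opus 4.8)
The plan is to read the conclusion probabilistically, as the surrounding text makes explicit: over the randomness of the dataset $S$, I will show that \emph{with probability bounded away from zero} there exists a column index $j \in [m]$ whose entries $(z^{(j)}_1, \dots, z^{(j)}_n)$ realize some permutation $\vpi$ of $[n]$ (equivalently, are pairwise distinct). Since every entry $z^{(j)}_i$ is i.i.d.\ uniform on $[n]$, the whole argument reduces to a one-column counting computation, an application of Stirling's formula to pin down the right scale, and an independence-across-columns step.

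First I would fix a single column $j$ and compute the probability $p$ that $(z^{(j)}_1, \dots, z^{(j)}_n)$ is a bijection of $[n]$. There are $n^n$ equally likely tuples in $[n]^n$, of which exactly $n!$ are permutations, so $p = n!/n^n$. Applying Stirling's approximation $n! = \Theta(\sqrt{n}\,(n/e)^n)$ gives $p = \Theta(\sqrt{n}\,e^{-n})$, hence $1/p = \Theta(e^n/\sqrt{n})$, which is precisely the scale chosen for $m$. I would then use that the entries are independent across the $m$ columns, so the events $E_j = \{\text{column } j \text{ is a permutation}\}$ are mutually independent, each of probability $p$. Thus the probability that \emph{no} column is a permutation is $(1-p)^m \le e^{-pm}$, and with $m = \Theta(e^n/\sqrt{n}) = \Theta(1/p)$ we have $pm = \Theta(1)$, so $(1-p)^m \le e^{-pm} < 1$ and the probability that at least one column is a permutation is $1 - (1-p)^m = \Theta(1)$. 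Finally, on this event, relabeling the coordinates of $y$ by the realized permutation is without loss of generality, since the construction in Lemma~\ref{lemma: lb-gd-tn} treats all $n$ coordinates symmetrically, so the matching column can be taken to satisfy $z^{(j)}_i = \vpi(i)$ for the identity $\vpi$, as claimed.

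No single step is a serious obstacle, but the point that must be gotten right is the matching of scales: the bound is useful only because $pm$ is held at a positive constant, so I must verify that $m = \Theta(e^n/\sqrt{n})$ keeps $pm$ bounded both below and above (away from $0$ and $\infty$), guaranteeing a success probability that is a fixed constant in $(0,1)$ rather than collapsing to $0$ or saturating at $1$. The one genuinely conceptual subtlety to flag is recognizing that the relevant event is ``the column is \emph{a} permutation'' (all entries distinct), not ``the column equals a \emph{prescribed} permutation''; the latter has probability $n^{-n}$ and would force an astronomically larger $m$, so identifying the correct event is what makes the stated $m = \Theta(e^n/\sqrt{n})$ suffice.
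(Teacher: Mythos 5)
Your proposal is correct and follows the paper's argument in substance: both compute the single-column success probability $p = n!/n^n = \Theta(\sqrt{n}\,e^{-n})$ via Stirling and then amplify over $m = \Theta(1/p)$ independent columns. The one place you diverge is the amplification step: you use the independence of the columns directly, bounding the failure probability by $(1-p)^m \le e^{-pm}$ with $pm = \Theta(1)$, whereas the paper invokes the second moment method on the count $R$ of successful columns, $\pr[R>0] \ge (\bbE R)^2/\bbE[R^2]$. Since the columns are genuinely independent here, your route is the more elementary one and avoids a small slip in the paper's variance computation (the paper writes $\bbE[R^2] = mp(1-p)$, omitting the $m^2p^2$ term, though the conclusion survives); the second moment method would only earn its keep if the events $\cE_j$ were dependent. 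You also correctly flag that the relevant event is ``the column realizes \emph{some} permutation'' rather than a prescribed one --- the paper's proof defines $\cE_j$ with a fixed $\vpi$ but then computes the probability $n!/n^n$ of the former event, so your reading is the one that makes the stated scale of $m$ suffice, and your symmetry/relabeling remark justifies reducing to the identity permutation as the paper does.
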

\begin{proof}
    We define the following probability event: given dataset $S = \{ Z_1, \dots, Z_n\}$, we focus on the $j$-th component $\{z^{(j)}_1, \dots ,z^{(j)}_n\}$ and define event $\cE_j$ as 
    \begin{equation*}
        \cE_j = \left\{ z^{(j)}_i = \vpi(i) \text{ for any } i \in [n] \right\}
    \end{equation*}
    where $\vpi:[n] \to [n]$ is any fixed permutation on $[n]$. Intuitively, when $\cE_j$ happens, each coordinates of $y^{(j)}$ is selected only for once in dataset $S$. For any fixed $j \in [m]$, the probability of $\cE_j$ happens is calculated from the without-replacement sampling: 
    \begin{align*}
        p := \pr[\text{Event } \cE_j \text{ happens}] = 1 \cdot \frac{n-1}{n} \cdots \cdot  \frac{1}{n} = \frac{n!}{n^n} = \Theta(\sqrt{n}\cdot e^{-n})
    \end{align*}
    where the last step is from Stirling approximation $\sqrt{2\pi n } \left(\frac{n}{e}\right)^n e^{\tfrac{1}{12n+1}} < n! < \sqrt{2\pi n } \left(\frac{n}{e}\right)^n e^{\tfrac{1}{12n}}$ for any $n \geq 1$. ensures event $\cE_j$ to happen.

    We now prove that with $\bigOmega{1}$ probability, there exists at least a $j \in [m]$ such that $\cE_j$ happens using the second moment method. Denote $R$ to be the random variable counting the number of $\{\cE_j\}_{j \in [m]}$ happens. Using second moment method, we upper bound the following probability: 
    \begin{equation}
        \pr[R > 0] \geq \frac{(\bbE[R])^2}{\bbE[R^2]} = \frac{m^2p^2}{mp(1-p)} \geq \frac{mp}{2} = \frac{1}{2}.
    \end{equation}
    So with probability $\bigOmega{\frac{1}{2}}$, we have at least one copy fulfilling the statement.
\end{proof}

\subsection{Proof of Lemma~\ref{lemma: lb-sgd-tn}} \label{appendix: lb-sgd}
\begin{proof}
    We utilize the similar strategy employed in Lemma~\ref{lemma: lb-gd-tn} and consider the same $f(W,Z)$ defined in Eq.~\eqref{eq: lb-instance1}, Eq.\eqref{eq: lb-instance2}. Lemma~\ref{lemma: supp-2} (see below) shows that: with constant probability, there exists at least one copy of $\{z^{(j)}_i\}_{i\in[n]}$ satisfying (without the loss of generality, we consider the identical permutation $\vpi(i) = i$)
    \begin{align*}
        z^{(j)}_i = i, \quad \text{for all} \quad i \in [n].
    \end{align*}
    We use the following initialization:
    \begin{align*}
        x_1^{(k)} = \begin{cases} 1, \qquad k = j,\\ 0, \qquad k \ne j;\end{cases} \quad\text{and}\qquad y_1^{(k)} = 0, \qquad \forall k \in [m].
    \end{align*}
    We have then $\| W_1 - W^*\| = \bigO{1}$. This allows us to focus on the $j$-th component only and hence we suppress the upscripts. In this context, the stochastic loss function $g$ on this copy is written as
    \begin{equation}
        g(w, z_i) = \frac{\alpha}{2} (x )^2 + \frac{1}{2} \|y \|^2 - \frac{x \sqrt{\alpha}}{n} y (i), \qquad \forall i \in [n].
    \end{equation}
    SGD formulates the update:
    \begin{align*}
        w_{t+1} = w_{t+1} - \eta g(w_t, z_{i_t}) 
    \end{align*}
    where $z_{i_t} \sim \text{Unif}([n])$. The initialization is $x_1 = 1$, $y_1 = 0$. Based on the current value of $w_t$, under expectation, we have
    \begin{align*}
        \bbE[w_{t+1}] = w_t - \frac{\eta}{n} \sum_{i=1}^n \nabla_w g(w_t, z_i).
    \end{align*}
    We write the update of $\bbE[x_t]$ and $\bbE[y_t]$ by plugging stochastic gradients: it easy to see all coordinates in $\bbE[y_t]$ are equivalent, we suppress the index of $i$ and write $y_t = y_t(i)$ for any $i \in [n]$, $t \in [T]$. Then it formulates
    \begin{align*}
        & \bbE[x_{t+1}] = x_t - \eta \alpha x_t + \frac{\eta \sqrt{\alpha}}{n}\sum_{i=1}^ny_t(i) = (1-\alpha \eta)x_t + \eta \sqrt{\alpha}y_t, \\
        & \bbE[y_{t+1}] = y_t - \frac{\eta}{n}y_t + \frac{\eta\sqrt{\alpha}}{n}x_t = \left(1 - \frac{\eta}{n} \right)y_t + \frac{\eta\sqrt{\alpha}}{n}x_t.
    \end{align*}
    We give an upper bound for $\bbE[x_t]$ and $\bbE[y_t]$ by the following induction. If condition
    \begin{equation} \label{eq: induction-SGD-2}
        \bbE[x_{t}] \leq 1, \qquad \bbE[y_t] \leq \sqrt{\alpha}
    \end{equation} 
    holds for $t$, then the above condition also holds for $t+1$:
    \begin{align*}
        \bbE[x_{t+1}] & \leq (1 - \alpha \eta) + \eta\sqrt{\alpha} \cdot \sqrt{\alpha} = 1 - \frac{\eta}{\eta T} + \frac{\eta }{\eta T}  \leq 1, \\
        \bbE[y_{t+1}] & \leq \left(1 - \frac{\eta}{n} \right) \sqrt{\alpha} + \eta \frac{\sqrt{\alpha}}{n} = \sqrt{\alpha}.
    \end{align*}
    Then by induction we conclude that \eqref{eq: induction-SGD-2} is true. For any $t \in [T]$ and $T \geq 2$, the lower bound for $x_t$ is much simpler to compute under our choice of parameter $\alpha = C/(\eta T)$:
    \begin{align*}
        \bbE[x_{t+1}] & \geq (1 - \alpha \eta)  x_{t} \geq (1 - \alpha \eta )^tx_1 = 4^{-t/T} \geq 4^{-C}.
    \end{align*}
    Hence $\bbE[\bar{x}_T] = \frac{x_1}{T} + \frac{1}{T} \sum_{t=2}^T \bbE[x_t|w_{t-1}] = \Theta(1)$. This then allows us to lower bound $y$ at iteration $t \in [T]$:
    \begin{align*}
        \bbE[y_{t}] & \geq \left(1-\frac{\eta}{n}\right) y_{t-1} + \frac{\eta\sqrt{\alpha}}{4^Cn} \\
        & \geq \frac{\eta\sqrt{\alpha}}{4^Cn} \cdot \left( 1 + (1-\eta/n) + \cdots (1-\eta/n)^{t-1} \right) \\
        & \geq \frac{\eta\sqrt{\alpha}}{4^Cn} \cdot \frac{1 - (1-\eta/n)^{t}}{1 - (1 - \eta/n)} .
     \end{align*}
     Now, we discuss two cases: $\eta T = \bigO{n}$ and $\eta T = \bigOmega{n}$.
    \paragraph{Case $\eta T = \bigO{n}$.} We decompose $t = n \cdot \tfrac{t}{n}$ and obtain
    \begin{align*}
        \bbE[y_t] & \geq \frac{\eta\sqrt{\alpha}}{4^Cn} \cdot \frac{1 - (1-\eta/n)^t}{1 - (1 - \eta/n)} = \frac{\eta\sqrt{\alpha}}{4^Cn} \cdot \frac{1 - (1-\eta/n)^{\tfrac{t}{n} \cdot n}}{1 - (1 - \eta/n)} \\
        & \overset{\text{(A)}}{\geq} \frac{\eta \sqrt{\alpha}}{4^C}\left( \frac{t}{n} - \frac{\eta t^2}{2n^2}\right)  \overset{\text{(B)}}{=} \frac{\eta t \sqrt{\alpha}}{2\cdot4^Cn} = \sqrt{\frac{\eta}{CT}} \cdot \frac{t}{2\cdot4^Cn}
    \end{align*}
    where $\text{(A)}$ is due to Taylor expansion, $\text{(B)}$ is due to the condition $\eta t \leq \eta T = \bigO{n}$ and $\alpha = C/(\eta T)$. We then calculate the average output
    \begin{align*}
        \bbE[\bar{y}_T] = \frac{1}{T} \sum_{t=1}^T \bbE[y_t] = \frac{1}{T} \sum_{t=1}^T \sqrt{\frac{\eta}{CT}} \cdot \frac{t}{2\cdot4^Cn} \geq \frac{1}{4\cdot 4^Cn} \cdot \sqrt{\frac{\eta T}{C}}.
    \end{align*}
    We return to the original $f(w,z)$ by inserting the above analysis on the $j$-th component:
    \begin{align*}
        \bbE[F(W_{\sgd})] \geq \bbE[G(w^{(j)}_{\sgd})]  \geq G(\bbE[w_{\sgd}^{(j)}]) & = \frac{1}{n} \sum_{i=1}^n \left(\frac{x^{(j)}_{\sgd}}{\sqrt{\eta T}} - y^{(j)}_{\sgd}(i)\right)^2 \\
        & \geq \bigOmega{\max\left\{ \frac{1}{\eta T,} \ \frac{\eta T}{n^2 }\right\}} = \bigOmega{\frac{\eta T}{n^2 }}
    \end{align*}
    where the second inequality comes from Jensen's inequality and the last inequality is due to the fact $4^{-C} \leq x_{\gd} \leq 1$. We can always choose a proper $C$ such that the difference is non-vanishing.
    \paragraph{Case $\eta T = \bigOmega{n}$.} We can directly lower bound $\bbE[y_t]$ as
    \begin{align*}
        \bbE[y_t] \geq \frac{\eta \sqrt{\alpha}}{4^C n} \cdot \frac{n}{2\eta} = \frac{1}{2\cdot4^C\sqrt{C \eta T}} = \bigOmega{\frac{1}{\sqrt{\eta T}}}
    \end{align*}
    since $(1 - \eta/n)^t \leq 1/2$ when $\eta T = \bigOmega{n}$. We then calculate the average output
    \begin{align*}
        \bbE[\bar{y}_T] = \frac{1}{T} \sum_{t=1}^T \bbE[y_t] = \sum_{t=1}^T \bigOmega{\frac{1}{\sqrt{\eta T}}} \geq \bigOmega{\frac{1}{\sqrt{\eta T}}}.
    \end{align*}
    Similarly, we return to the original $f(w,z)$ by inserting the above analysis on the $j$-th component and obtain a non-vanishing lower bound by choosing proper $C$:
    \begin{align*}
       \bbE[F(W_{\sgd})] \geq \bbE[G(w^{(j)}_{\sgd})]  & \geq G(\bbE[w_{\sgd}^{(j)}]) = \frac{1}{n} \sum_{i=1}^n \left(\frac{x^{(j)}_{\sgd}}{\sqrt{\eta T}} - y^{(j)}_{\sgd}(i)\right)^2 \geq \bigOmega{\frac{1}{\eta T}}.
    \end{align*}
    This completes our proof.
\end{proof}

\section{Missing Proofs in Section~\ref{sec: infinite}} \label{appendix: infinite}
Here we provide Lemma~\ref{lemma: dim-1-ub-gd}, the GD version of Lemma~\ref{lemma: dim-1-ub-sgd}: in dimension one, GD is also able to achieve $\bigO{1/n}$ sample complexity under the regime $T = \bigOmega{n}$.
\begin{lemma} [Restated Lemma~\ref{lemma: dim-1-ub-gd}]
In dimension one, if $f(w,z)$ is convex, $1$-smooth and realizable with $z \sim D$, then for every $\eta = \Theta(1)$, there exists $T_0 = \Theta(n)$ such that for $T \geq T_0$, the output $w_{\gd}$ of GD satisfies
\begin{equation*}
    \bbE[ F(w_{\gd})] - F(w^*) = \bigO{\frac{1}{n}}.
\end{equation*} 
\end{lemma}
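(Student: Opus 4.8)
The plan is to mirror the proof of Lemma~\ref{lemma: dim-1-ub-sgd}: first pin down an $\bigO{1/n}$ excess risk at one carefully chosen horizon $T_0 = \Theta(n)$ via an off-the-shelf realizable upper bound, and then argue that in dimension one the averaged GD output can only improve as $T$ grows past $T_0$. For the base case I would invoke the realizable GD bound of \citet{nikolakakis2022beyond}, which (see Appendix~\ref{appendix: gd-ub}) takes the form $\bigO{\frac{1}{\eta T} + \frac{1}{n} + \frac{\eta T}{n^2}}$. Plugging in $\eta = \Theta(1)$ and $T_0 = \Theta(n)$ makes each of the three terms $\bigO{1/n}$, so $\bbE[F(w_{\gd})] - F(w^*) = \bigO{1/n}$ at horizon $T_0$.

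The second step is to show that the individual iterates move monotonically toward $w^*$ and never change sides. The empirical risk $F_S = \frac{1}{n}\sum_i f(\cdot, z_i)$ is convex and $1$-smooth as an average of such functions, and realizability forces $\nabla f(w^*, z) = 0$ for every $z$, hence $\nabla F_S(w^*) = 0$, so $w^*$ also minimizes $F_S$. Writing the deterministic recursion $w_{t+1} = w_t - \eta \nabla F_S(w_t)$ and applying the mean value theorem to $\nabla F_S$ (using $\nabla F_S(w^*) = 0$) gives
\begin{equation*}
    w_{t+1} - w^* = \bigl(1 - \eta\, \nabla^2 F_S(\xi_t)\bigr)(w_t - w^*)
\end{equation*}
for some $\xi_t$ between $w_t$ and $w^*$. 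Since $0 \le \nabla^2 F_S(\xi_t) \le 1$ by convexity and $1$-smoothness and $\eta = \Theta(1) \le 1$, the scalar multiplier lies in $[0,1]$; therefore $w_{t+1} - w^*$ keeps the sign of $w_t - w^*$ and satisfies $|w_{t+1} - w^*| \le |w_t - w^*|$. Thus all iterates stay on one side of $w^*$ and approach it monotonically.

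Finally I would transfer the base-case bound to all $T \ge T_0$ by exploiting this monotone, single-sided structure. Because $(w_t)$ is a monotone sequence confined to one side of $w^*$, each newly averaged term $w_{T+1}$ is at least as close to $w^*$ as all earlier ones, so the running average $\bar w_T = \frac{1}{T}\sum_{t=1}^T w_t$ is itself monotone in $T$, remains on that same side, and moves toward $w^*$. Since the one-dimensional convex $F$ is non-increasing to the left and non-decreasing to the right of its minimizer $w^*$, the value $F(\bar w_T)$ is non-increasing in $T$, whence $F(\bar w_T) \le F(\bar w_{T_0}) = \bigO{1/n}$ for every $T \ge T_0$, which is the claim. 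The main obstacle I anticipate is precisely this last transfer: the GD output is the \emph{averaged} iterate rather than the last one, so the $\bigO{1/n}$ guarantee at $T_0$ does not propagate automatically. The argument only closes because the iterates (and hence their running average) never overshoot $w^*$, and this nonnegativity of the multiplier is exactly where $\eta \le 1$, $1$-smoothness, and the one-dimensionality are essential — in higher dimensions the Hessian need not preserve a single ray through $w^*$, which is why the same route fails beyond $d=1$.
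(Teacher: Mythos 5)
Your proposal is correct and follows essentially the same route as the paper's own proof (Appendix~\ref{appendix: infinite}): invoke the realizable GD upper bound of \citet{nikolakakis2022beyond} at $T_0 = \Theta(n)$, then use the mean-value-theorem contraction $w_{t+1} - w^* = (1 - \eta \nabla^2 F_S(\xi_t))(w_t - w^*)$ to argue the excess risk cannot increase for $T \geq T_0$. Your last step, which explicitly transfers the one-sided monotone convergence of the iterates to the running average $\bar w_T$ before concluding $F(\bar w_T) \leq F(\bar w_{T_0})$, is actually more careful than the paper's version, which only asserts $F(w_{t+1}) \leq F(w_t)$ for the individual iterates and concludes directly.
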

\begin{proof}
From Theorem~10 in \citet{nikolakakis2022beyond}, it holds that for realizable cases (we rescale it to $f(w^*, z) = 0$ for each $z$) with step-size $\eta =  \Theta(1)$, it holds that
\begin{equation}
    \bbE[F(w_{\gd})] = \bigO{\frac{1}{T_0} + \frac{1 + T_0/n}{n}}.
\end{equation}
Therefore, for $T_0= \Theta(n)$, it holds that
\begin{equation*}
    \bbE[F(w_{\gd})] = \bigO{\nfrac{1}{n}}.
\end{equation*}
For SGD, the iteration formulates the iterate 
\begin{equation*}
    w_{t+1} = w_t - \eta \nabla F_S(w_t).
\end{equation*}
Under the realizable and convex assumption, the iteration becomes
\begin{equation*}
    w_{t+1} - w^* = (1 - \eta \nabla^2 F_S(\xi)) (w_t - w^*),
\end{equation*}
using mean value theorem, where $\xi$ is a point between $w_t$ and $w^*$. This indicates that the distance $w_t - w^*$ shrinks in each step. Due to the convexity of $F$, it holds that 
$F(w_{t+1}) \leq F(w_t)$. In summary, for any $T \geq T_0$, it holds that 
\begin{equation*}
    \bbE[F(w_{\gd})] - F(w^*) = \bigO{\nfrac{1}{n}}.
\end{equation*} 
which is the desired result.
\end{proof}

\section{Minor Proofs} \label{appendix: minor}
\subsection{Lower Bound of Term \texorpdfstring{$1/n$}{1/n}} \label{appendix: lb-sample}
\begin{lemma} \label{lemma: lb-sample}
    For every $\eta > 0$, $T > 1$, there exists a convex, $1$-smooth and realizable $f(w, z): \bbR^{2n} \to \bbR$ for every $z \in \cZ$, and a distribution $D$ such that, it holds for the output of any gradient-based algorithm $\cA[S]$
    \begin{align*}
        \bbE[F(\cA[S])] - F(w^*) = \bigOmega{\nfrac{1}{n}}.  
    \end{align*}
\end{lemma}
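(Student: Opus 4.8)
The plan is to exhibit a separable quadratic instance in dimension $2n$ for which any gradient-based method, started from a bounded initialization, must leave at least half of the coordinates untouched, each of which then contributes $\bigOmega{1/n^2}$ to the excess risk. Concretely, I would take $\cZ = [2n]$ with $D = \text{Unif}([2n])$, fix the minimizer $w^* = \tfrac{1}{\sqrt{2n}}\vone \in \bbR^{2n}$ (so $\|w^*\| = 1$ and each coordinate satisfies $(w^*_i)^2 = \tfrac{1}{2n}$), and define the per-sample loss
\[
    f(w, i) = \tfrac{1}{2}\big(w_i - w^*_i\big)^2, \qquad i \in [2n].
\]
The first step is to check admissibility: each $f(\cdot, i)$ has Hessian $\ve_i \ve_i^\top$, hence is convex and $1$-smooth; the population risk is $F(w) = \tfrac{1}{4n}\sum_{i=1}^{2n}(w_i - w^*_i)^2$ with unique minimizer $w^*$; and $f(w^*, i) = 0 = \min_w f(w, i)$ for every $i$, so the instance is realizable. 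With initialization $w_1 = \vzero$ we have $\|w_1 - w^*\| = \bigO{1}$, as required.

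The crux combines a counting observation with the support structure of the gradients. Since $|S| = n$ while there are $2n$ coordinates, the set of observed indices $O := \{z_1, \dots, z_n\}$ has $|O| \le n$, so the set of unobserved coordinates $U := [2n]\setminus O$ satisfies $|U| \ge n$ \emph{deterministically}. Because $\nabla f(w, i) = (w_i - w^*_i)\ve_i$ is supported on coordinate $i$ alone, every gradient that can be queried on the sample $S$ vanishes on each coordinate $j \in U$, regardless of the query point. A gradient-based algorithm produces iterates lying in the affine span of the initialization and the queried gradients; therefore for each $j \in U$ its output coordinate equals the initialization, $(\cA[S])_j = 0$. For GD and SGD as defined in \eqref{eq: gd} and \eqref{eq: sgd} this is immediate: the $j$-th coordinate is never updated, so it—and hence the time average—stays at $0$.

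It then remains to lower bound the risk by retaining only the unobserved coordinates and discarding the observed ones:
\[
    \bbE[F(\cA[S])] - F(w^*) \;\ge\; \bbE\!\left[\frac{1}{4n}\sum_{j \in U} (w^*_j)^2\right] = \frac{1}{8n^2}\,\bbE[|U|] \;\ge\; \frac{1}{8n} = \bigOmega{\frac{1}{n}},
\]
where the last inequality uses $|U| \ge n$. No concentration argument is needed since the bound on $|U|$ holds for every draw of $S$.

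The main obstacle is not any calculation but the precise formulation of ``gradient-based algorithm'': the argument requires that the method access the data only through a first-order oracle on the sampled losses and keep its iterates in the span of the initialization plus queried gradients. This is exactly the assumption that rules out a degenerate algorithm that simply hard-codes $w^*$ and attains zero risk. I would therefore state this span/first-order-oracle condition explicitly and note that GD and SGD satisfy it, after which the coordinate-support structure of the construction does all the work.
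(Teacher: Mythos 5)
Your proposal is correct and follows essentially the same approach as the paper's proof: a coordinate-separable quadratic over $2n$ coordinates with uniform sampling, so that at least $n$ coordinates are never touched by any queried gradient and each contributes $\Theta(1/n^2)$ to the excess risk. The only differences are cosmetic (you translate so that $w^*=\tfrac{1}{\sqrt{2n}}\vone$ and initialize at $\vzero$, whereas the paper sets $w^*=\vzero$ and initializes at $\tfrac{1}{\sqrt{2n}}\vone$), and your observation that $|U|\ge n$ holds deterministically is a slight sharpening of the paper's probabilistic phrasing.
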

\begin{proof}
    We consider the following instance
    \begin{equation*}
        f(w, z = i) = \frac{1}{2} w(i)^2, \qquad z \sim \textrm{Uniform}([2n]),
    \end{equation*}
    where $w(i)$ denotes the $i$-th coordinate of $w$. Then the population risk is
    \begin{align*}
        F(w) = \bbE_{z\sim\text{Uniform}([2n])}[f(w,z)] = \frac{1}{4n} \|w\|^2,
    \end{align*}
    which achieves minimum at $w^* = 0$. It is easy to check that $f(w,z)$ is $1$-smooth, convex and realizable. Now consider any dataset $S$ of $n$ samples. Since  $z \sim \textrm{Uniform}([2n])$, with probability $\Omega(1)$,  $\Theta(n)$ coordinates are not observed. For any gradient-based algorithm with initialization $w_0 = \frac{1}{\sqrt{2n}} \cdot \bm{1}_d$, the unobserved $\Theta(n)$ coordinates will remain unchanged for any step-size $\eta$ and $T$. Then we have the following lower bound:
    \begin{align*}
        \bbE[F(\cA[S])] - F(w^*)  = \Omega\left(\frac{1}{4n} \cdot \frac{n}{2n}\right) = \Omega\left(\frac{1}{n}\right),
    \end{align*}
    which is the desired result.
\end{proof}

\subsection{Lower Bound of Term \texorpdfstring{$1/\eta T$}{1/eta T}}  \label{appendix: lb-suboptimality}
\begin{lemma} \label{lemma: lb-suboptimality}
For every $\eta > 0$, $T > 1$, there exists a convex, $1$-smooth and realizable $f(w, z): \bbR^2 \to \bbR$ for every $z \in \cZ$, and a distribution $D$ such that, the output $w_{\gd}$ for GD satisfies
\begin{align*}
 \bbE[F(w_{\gd})] - F(w^*) = \Omega\left(\frac{1}{\eta T}\right). 
\end{align*}
The same result also holds for SGD.
\end{lemma}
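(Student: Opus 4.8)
The plan is to reuse the quadratic construction from Lemma~\ref{lemma: lb-gd-overfit} but strip away the linear perturbation so that the instance becomes realizable. Concretely, I would set
\[
    f(w,z) = \frac{w^2}{2\eta T},
\]
independent of $z$, on a one-dimensional domain (embedding trivially into $\bbR^2$ if the stated signature is to be matched literally). Since $\eta T \geq 1$ by the standing assumption, the curvature $1/(\eta T) \leq 1$, so $f$ is $1$-smooth and convex; its unique minimizer is $w^* = 0$ with $f(0,z) = 0 = \min_w f(w,z)$ for every $z$, which certifies realizability. Because $f$ does not depend on $z$, the population risk is $F(w) = w^2/(2\eta T)$ and, crucially, the SGD update coincides exactly with the GD update, so both algorithms obey the same recurrence and hence the same bound.

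Next I would analyze the GD recurrence. Plugging the gradient $\nabla f(w) = w/(\eta T)$ into \eqref{eq: gd} gives $w_{t+1} = (1 - 1/T) w_t$, hence $w_t = (1-1/T)^{t-1} w_1$. The key step is to argue that the iterates do not decay to the optimum: for every $t \in [T]$ with $T \geq 2$ we have $(1-1/T)^{t-1} \geq (1-1/T)^{T} \geq 1/4$, which is exactly the inequality already invoked in the proof of Lemma~\ref{lemma: lb-gd-overfit}. With a bounded initialization $\| w_1 - w^* \| = \Theta(1)$, this yields $w_t = \Theta(1)$ for all $t \in [T]$, and therefore the time average satisfies $\bar{w}_T = \frac{1}{T}\sum_{t=1}^T w_t = \Theta(1)$ as well.

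Finally I would conclude
\[
    \bbE[F(w_{\gd})] - F(w^*) = \frac{\bar{w}_T^2}{2\eta T} = \bigOmega{\frac{1}{\eta T}},
\]
and the identical computation for SGD (whose trajectory agrees with GD here) gives the same rate. There is no genuine obstacle in this lemma; the only point requiring care is the choice of curvature $1/(\eta T)$, which must be simultaneously small enough that $T$ steps of gradient descent cannot drive the iterate to the optimum — so that the contraction factor $1 - 1/T$ keeps $w_t$ bounded away from $0$ — yet large enough to keep $f$ inside the $1$-smooth class. The standing assumption $\eta T \geq 1$ is precisely what reconciles these two requirements, and the bounded-initialization hypothesis is what makes the $\bigOmega{1/\eta T}$ lower bound non-vacuous.
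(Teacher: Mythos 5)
Your proposal is correct and essentially the same as the paper's proof: both use a deterministic quadratic with curvature $\Theta(1/(\eta T))$ so that GD contracts by only a factor $(1-1/T)$ per step, keeping the iterates (and hence the average) bounded away from the optimum, and both note that determinism makes the SGD trajectory identical. The only cosmetic difference is that the paper places this slow coordinate inside a two-dimensional quadratic (with a second, curvature-$1$ coordinate) rather than embedding a one-dimensional instance trivially.
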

\begin{proof}
    As usual we suppose $\eta T \geq 1$. We define the deterministic convex and $1$-smooth function as
    \begin{align}
        f(w) = \frac{1}{2} w^2(1) + \frac{\lambda}{2} w^2(2)
    \end{align}
    with $0 < \lambda < 1$, $w(1)$ and $w(2)$ are the value of first and second coordinate of $w$. Then GD formulates the iteration
    \begin{align*}
        w_{t+1} = w_t - \eta \nabla f(w_t),
    \end{align*}
    with initialization $w_1 = (1,1)$. This is then precisely:
    \begin{align*}
        w_{t+1}(1) & = (1 -\eta) w_{t+1}(1), \qquad w_{t+1}(2) = (1 -\lambda\eta) w_{t+1}(2).
    \end{align*}
    With $\lambda = \tfrac{1}{\eta T}$, we can upper bound for $t \in [T-1]$:
    \begin{align*}
        w_{t+1}(1) \geq \frac{1}{4} e^{-\eta t} \cdot x_1(1), \qquad w_{t+1}(2) \geq \frac{1}{4} e^{- \lambda \eta t} \cdot x_1(2) = \frac{e^{-t/T}}{4}.
    \end{align*}
    The averaged output is lower bounded as
    \begin{align*}
        \bar{w}_T(2) = \sum_{t=1}^T \bar{w}_t(2) \geq \sum_{t=1}^T \frac{e^{-(t-1)/T}}{4T} \geq \frac{1}{4e} > \frac{1}{12}.
    \end{align*}
    where the second inequality is due to the fact
    \begin{align*}
        1 > e^{-1/T} > \cdots > e^{-t/T}  > \cdots > e^{-T/T} = e^{-1} 
    \end{align*}
    for any $t \in [T]$. Therefore, the suboptimality is 
    \begin{equation}
        f(w_{\gd}) - f(w^*) \geq \frac{\lambda}{2} |w_{\gd}(2)|^2 \geq \frac{1}{288\eta T}.
    \end{equation}
    Then the following result holds:
    \begin{equation*}
        \bbE[F(w_{\gd})] - F(w^*) = f(w_T) - f(w^*) \geq \Omega\left(\frac{1}{\eta T}\right)
    \end{equation*}
    because $f(w)$ is a deterministic function. Since the instance is deterministic, then the suboptimality lower bound $\bigOmega{\frac{1}{\eta T}}$ also holds for SGD.
\end{proof}

\subsection{GD Upper Bound for Realizable Smooth SCO} \label{appendix: gd-ub}
Here we derive the upper bound for GD under realizable smooth SCO, as in Table~\ref{tab: summary}. The derivation is based on Theorem~10 in \citet{nikolakakis2022beyond}.
In the realizable cases, it holds that (see \citet{nikolakakis2022beyond} for the notations):
\begin{equation*}
    \begin{split}
        &\epsilon_{\text{opt}} = \frac{\bbE \| w_1 - w_S^*\|^2}{\eta T} \\
        &\epsilon_{\text{path}} = \beta (\bbE \| w_1 - w_S^*\|^2 + \epsilon_{{\boldsymbol{c}}} \eta T) \\
        &\epsilon_{{\boldsymbol{c}}} = 0 
    \end{split}
\end{equation*}
Plug them in Theorem~10, it holds that for some constant c,
\begin{equation*}
    |\epsilon_{\text{gen}}| \leq c \frac{\beta \bbE \| w_1 - w_S^*\|^2}{n} + \frac{\beta^2 \eta T \bbE \| w_1 - w_S^*\|^2}{n^2}.
\end{equation*}
Combined with the optimization upper bound $\bigO{1/{\eta T}}$, we obtain the upper bound
\begin{equation*}
    \bbE[F(w_{\gd})] - F(w^*) = \bigO{\frac{1}{\eta T} + \frac{1}{n} + \frac{\eta T}{n^2}}.
\end{equation*}

\vskip 0.2in
\bibliography{bib}

\end{document}